\newcommand*{\prog}{\mathit{prog}}
\newcommand{\thickhline}{%
	\noalign {\ifnum 0=`}\fi \hrule height 1pt
	\futurelet \reserved@a \@xhline
}
\newcolumntype{"}{@{\hskip\tabcolsep\vrule width 1pt\hskip\tabcolsep}}
\newcommand{\iden}{\ensuremath{\mathbb{I}}}
\newcommand{\pto}{\overset{p}{\to}}
\newcommand{\dto}{\overset{d}{\to}}
\newtheorem{assumption}{Assumption}
\definecolor{blue}{rgb}{0,0.3,0.7}
\definecolor{red}{rgb}{0.60,0.0,0.0}
\definecolor{purple}{rgb}{0.5,0,0.7}
\definecolor{cyan}{rgb}{0.0,0.6,0.5}
\definecolor{gray}{rgb}{0.4,0.4,0.4}
\lstdefinelanguage{scheme}
{sensitive, %
 alsoletter={:,-,+,*,?,/,!,>,<}, %
 morecomment=[l]{;}, %
}[comments]
\lstdefinelanguage{anglican}%
{%
 morekeywords=[1]{},
 morekeywords=[2]{%
   def, def-, defn, defn-, defmacro, defmulti, defmethod, %
   defstruct, defonce, declare, definline, definterface, %
   defprotocol, defrecord, defstruct, deftype, defproject, ns, %
 }, %
 morekeywords=[3]{->, ->>, .., amap, and, areduce, as->, assert, binding, %
   bound-fn, case, comment, cond, cond->, cond->>, condp, declare, definline, %
   definterface, defmacro, defmethod, defmulti, defn, defn-, defonce, %
   defprotocol, defrecord, defstruct, deftype, delay, doseq, dosync, dotimes, %
   doto, extend-protocol, extend-type, fn, for, future, gen-class, %
   gen-interface, if, if-let, if-not, if-some, import, io!, lazy-cat, lazy-seq, let, %
   letfn, locking, loop, memfn, ns, or, proxy, proxy-super, pvalues, %
   recur, refer-clojure, reify, some->, some->>, sync, time, when, when-first, %
   when-let, when-not, when-some, while, with-bindings, with-in-str, %
   with-loading-context, with-local-vars, with-open, with-out-str, %
   with-precision, with-redefs, else}, %
  morekeywords=[4]{*, *', +, +', -, -', ->ArrayChunk, ->Vec, ->VecNode, %
    ->VecSeq, -cache-protocol-fn, -reset-methods, /, <, <=, =, ==, >, >=, %
    accessor, aclone, add-classpath, add-watch, agent, agent-error, %
    agent-errors, aget, alength, alias, all-ns, alter, alter-meta!, %
    alter-var-root, ancestors, apply, array-map, aset, aset-boolean, aset-byte, %
    aset-char, aset-double, aset-float, aset-int, aset-long, aset-short, assoc, %
    assoc!, assoc-in, associative?, atom, await, await-for, await1, bases, bean, %
    bigdec, bigint, biginteger, bit-and, bit-and-not, bit-clear, bit-flip, %
    bit-not, bit-or, bit-set, bit-shift-left, bit-shift-right, bit-test, %
    bit-xor, boolean, boolean-array, booleans, bound-fn*, bound?, butlast, byte, %
    byte-array, bytes, cast, char, char-array, char?, chars, chunk, %
    chunk-append, chunk-buffer, chunk-cons, chunk-first, chunk-next, chunk-rest, %
    chunked-seq?, class, class?, clear-agent-errors, clojure-version, coll?, %
    commute, comp, comparator, compare, compare-and-set!, compile, complement, %
    concat, conj, conj!, cons, constantly, construct-proxy, contains?, count, %
    counted?, create-ns, create-struct, cycle, dec, dec', decimal?, delay?, %
    deliver, denominator, deref, derive, descendants, destructure, disj, disj!, %
    dissoc, dissoc!, distinct, distinct?, doall, dorun, double, double-array, %
    doubles, drop, drop-last, drop-while, empty, empty?, ensure, %
    enumeration-seq, error-handler, error-mode, eval, even?, every-pred, every?, %
    ex-data, ex-info, extend, extenders, extends?, false?, ffirst, file-seq, %
    filter, filter-ns-publics, filterv, find, find-keyword, find-ns, %
    find-protocol-impl, find-protocol-method, find-var, first, flatten, float, %
    float-array, float?, floats, flush, fn?, fnext, fnil, force, format, %
    frequencies, future-call, future-cancel, future-cancelled?, future-done?, %
    future?, gensym, get, get-in, get-method, get-proxy-class, %
    get-thread-bindings, get-validator, group-by, hash, hash-combine, hash-map, %
    hash-ordered-coll, hash-set, hash-unordered-coll, identical?, identity, %
    ifn?, in-ns, inc, inc', init-proxy, instance?, int, int-array, integer?, %
    interleave, intern, interpose, into, into-array, ints, isa?, iterate, %
    iterator-seq, juxt, keep, keep-indexed, key, keys, keyword, keyword?, last, %
    line-seq, list, list*, list?, load, load-file, load-reader, load-string, %
    loaded-libs, long, long-array, longs, macroexpand, macroexpand-1, %
    make-array, make-hierarchy, map, map-indexed, map?, mapcat, mapv, max, %
    max-key, memoize, merge, merge-with, meta, method-sig, methods, min, %
    min-key, mix-collection-hash, mod, munge, name, namespace, namespace-munge, %
    neg?, newline, next, nfirst, nil?, nnext, not, not-any?, not-empty, %
    not-every?, not=, ns-aliases, ns-functions, ns-imports, ns-interns, %
    ns-macros, ns-map, ns-name, ns-publics, ns-refers, ns-resolve, ns-unalias, %
    ns-unmap, nth, nthnext, nthrest, num, number?, numerator, object-array, %
    odd?, parents, partial, partition, partition-all, partition-by, pcalls, %
    peek, persistent!, pmap, pop, pop!, pop-thread-bindings, pos?, pr, pr-str, %
    prefer-method, prefers, print, print-ctor, print-simple, print-str, printf, %
    println, println-str, prn, prn-str, promise, proxy-call-with-super, %
    proxy-mappings, proxy-name, push-thread-bindings, quot, rand, rand-int, %
    rand-nth, range, ratio?, rational?, rationalize, re-find, re-groups, %
    re-matcher, re-matches, re-pattern, re-seq, read, read-line, read-string, %
    realized?, record?, reduce, reduce-kv, reduced, reduced?, reductions, ref, %
    ref-history-count, ref-max-history, ref-min-history, ref-set, refer, %
    release-pending-sends, rem, remove, remove-all-methods, remove-method, %
    remove-ns, remove-watch, repeat, repeatedly, replace, replicate, require, %
    reset!, reset-meta!, resolve, rest, restart-agent, resultset-seq, reverse, %
    reversible?, rseq, rsubseq, satisfies?, second, select-keys, send, send-off, %
    send-via, seq, seq?, seque, sequence, sequential?, set, %
    set-agent-send-executor!, set-agent-send-off-executor!, set-error-handler!, %
    set-error-mode!, set-validator!, set?, short, short-array, shorts, shuffle, %
    shutdown-agents, slurp, some, some-fn, some?, sort, sort-by, sorted-map, %
    sorted-map-by, sorted-set, sorted-set-by, sorted?, special-symbol?, spit, %
    split-at, split-with, str, string?, struct, struct-map, subs, subseq, %
    subvec, supers, swap!, symbol, symbol?, take, take-last, take-nth, %
    take-while, test, the-ns, thread-bound?, to-array, to-array-2d, trampoline, %
    transient, tree-seq, true?, type, unchecked-add, unchecked-add-int, %
    unchecked-byte, unchecked-char, unchecked-dec, unchecked-dec-int, %
    unchecked-divide-int, unchecked-double, unchecked-float, unchecked-inc, %
    unchecked-inc-int, unchecked-int, unchecked-long, unchecked-multiply, %
    unchecked-multiply-int, unchecked-negate, unchecked-negate-int, %
    unchecked-remainder-int, unchecked-short, unchecked-subtract, %
    unchecked-subtract-int, underive, unsigned-bit-shift-right, update-in, %
    update-proxy, use, val, vals, var-get, var-set, var?, vary-meta, vec, %
    vector, vector-of, vector?, with-bindings*, with-meta, with-redefs-fn, %
    xml-seq, zero?, zipmap}, %
  morekeywords=[5]{def-cps-fn, defanglican, defm, defquery, defun, defproc, defdist}, %
  morekeywords=[6]{cps-fn, fm, lambda, query, with-primitive-procedures}, %
  morekeywords=[7]{%
    doquery, %
    conditional, %
    collect-by, equalize, exec, infer, log-marginal, print-predicts, %
    rand, rand-int, rand-nth, rand-roulette, stripdown, warmup, %
    ->CRP-process, ->DP-process, ->GP-process, %
    ->bernoulli-distribution, ->beta-distribution, ->binomial-distribution, %
    ->categorical-crp-distribution, ->categorical-distribution, %
    ->categorical-dp-distribution, ->chi-squared-distribution, %
    ->dirichlet-distribution, ->discrete-distribution, %
    ->exponential-distribution, ->flip-distribution, ->gamma-distribution, %
    ->mvn-distribution, ->normal-distribution, ->poisson-distribution, %
    ->sample, ->observe, sample*, observe*, %
    ->uniform-continuous-distribution, ->uniform-discrete-distribution, %
    ->wishart-distribution, CRP, DP, GP, abs, absorb, acos, asin, atan, %
    bernoulli, beta, binomial, categorical, categorical-crp, categorical-dp, %
    cbrt, ceil, chi-squared, cos, cosh, cov, dirichlet, discrete, exp, %
    exponential, flip, floor, gamma, gen-matrix, log, log-gamma-fn, %
    log-mv-gamma-fn, log-sum-exp, map->CRP-process, map->DP-process, %
    map->GP-process, map->bernoulli-distribution, map->beta-distribution, %
    map->binomial-distribution, map->categorical-crp-distribution, %
    map->categorical-distribution, map->categorical-dp-distribution, %
    map->chi-squared-distribution, map->dirichlet-distribution, %
    map->discrete-distribution, map->exponential-distribution, %
    map->flip-distribution, map->gamma-distribution, map->mvn-distribution, %
    map->normal-distribution, map->poisson-distribution, %
    map->uniform-continuous-distribution, map->uniform-discrete-distribution, %
    map->wishart-distribution, mvn, normal, poisson, pow, produce, %
    rint, round, signum, sin, sinh, sqrt, tag, tan, tanh, transform-sample, %
    uniform-continuous, uniform-discrete, wishart, uniform, %
    add-log-weight, add-predict, clear-predicts, get-log-weight, %
    get-mem, get-predicts, in-mem?, set-log-weight, set-mem, %
  }, %
  morekeywords=[8]{factor, mem, observe, predict, retrieve, sample, store}, %
  sensitive, %
  alsoletter={:,-,+,*,?,/,!,>,<,.}, %
  morecomment=[l][\color{gray}]{;}, %
  morestring=[b]", %
}[keywords,comments,strings]
\em\color{gray},
\lstdefinestyle{default}{language=Anglican,basicstyle=\ttfamily\small, columns=flexible, showstringspaces=false, numbers=left}
\lstdefinestyle{clojure}{language=Anglican,basicstyle=\ttfamily\small, columns=flexible, showstringspaces=false}
\icmltitlerunning{Divide, Conquer, and Combine}
\begin{document}

\twocolumn[
\icmltitle{Divide, Conquer, and Combine: a New Inference Strategy \\ 
	for Probabilistic Programs with Stochastic Support}

\icmlsetsymbol{equal}{*}

\begin{icmlauthorlist}
	\icmlauthor{Yuan Zhou}{ox-cs}
	\icmlauthor{Hongseok Yang}{kaist-cs}
	\icmlauthor{Yee Whye Teh}{ox-stats}
	\icmlauthor{Tom Rainforth}{ox-stats}
\end{icmlauthorlist}

\icmlaffiliation{ox-cs}{Department of Computer Science, University of Oxford, United Kingdom}
\icmlaffiliation{ox-stats}{Department of Statistics, University of Oxford, United Kingdom}
\icmlaffiliation{kaist-cs}{School of Computer Science, KAIST, South Korea}

\icmlcorrespondingauthor{Yuan Zhou}{yuan.zhou@cs.ox.ac.uk}

\icmlkeywords{Probabilistic Programming, Bayesian Inference}

\vskip 0.3in
]

\printAffiliationsAndNotice{}  %

\begin{abstract}

Universal probabilistic programming systems (PPSs) provide a powerful framework for specifying rich
probabilistic models. 
They further attempt to automate the process of drawing inferences from these models, but doing this successfully is severely hampered by the wide range of non--standard models they can express. 
As a result, although one can specify complex models in a universal PPS, the provided inference engines often fall far short of what is required. 
In particular, we show that they produce surprisingly unsatisfactory performance for models where the support varies between executions, often doing no better than importance sampling from the prior. 
To address this, we introduce a new inference framework: Divide, Conquer, and Combine, 
which remains efficient for such models, 
and show how it can be implemented as an automated and generic PPS inference engine. 
We empirically demonstrate substantial performance improvements over existing approaches on three examples.
\end{abstract}

\section{Introduction}
\label{sec:intro}

Probabilistic programming systems (PPSs) provide a flexible platform where probabilistic models are specified as programs and inference procedures are performed in an automated manner.
Some systems, such as BUGS~\cite{tesauro2012bayesian} and Stan~\cite{carpenter2017stan}, are primarily designed around the efficient automation of a small number of inference strategies and the convenient expression of models for which these inference strategies are suitable.

\emph{Universal PPSs}, such as Church~\cite{goodman2012church}, Venture~\cite{mansinghka2014venture}, Anglican~\cite{wood2014new}, and Pyro~\cite{bingham2018pyro}, on the other hand, are set up to try and support the widest possible range of models a user might wish to write.
Though this means that such systems can be used to write models which would be otherwise difficult to encode, this expressiveness comes at the cost of significantly complicating the automation of inference.
In particular, models may contain random variables with mixed types or have varying, or even unbounded, dimensionalities; characteristics which cause significant challenges at the inference stage.

\begin{figure}[!t]
	\centering
	\includegraphics[width=0.4\linewidth]{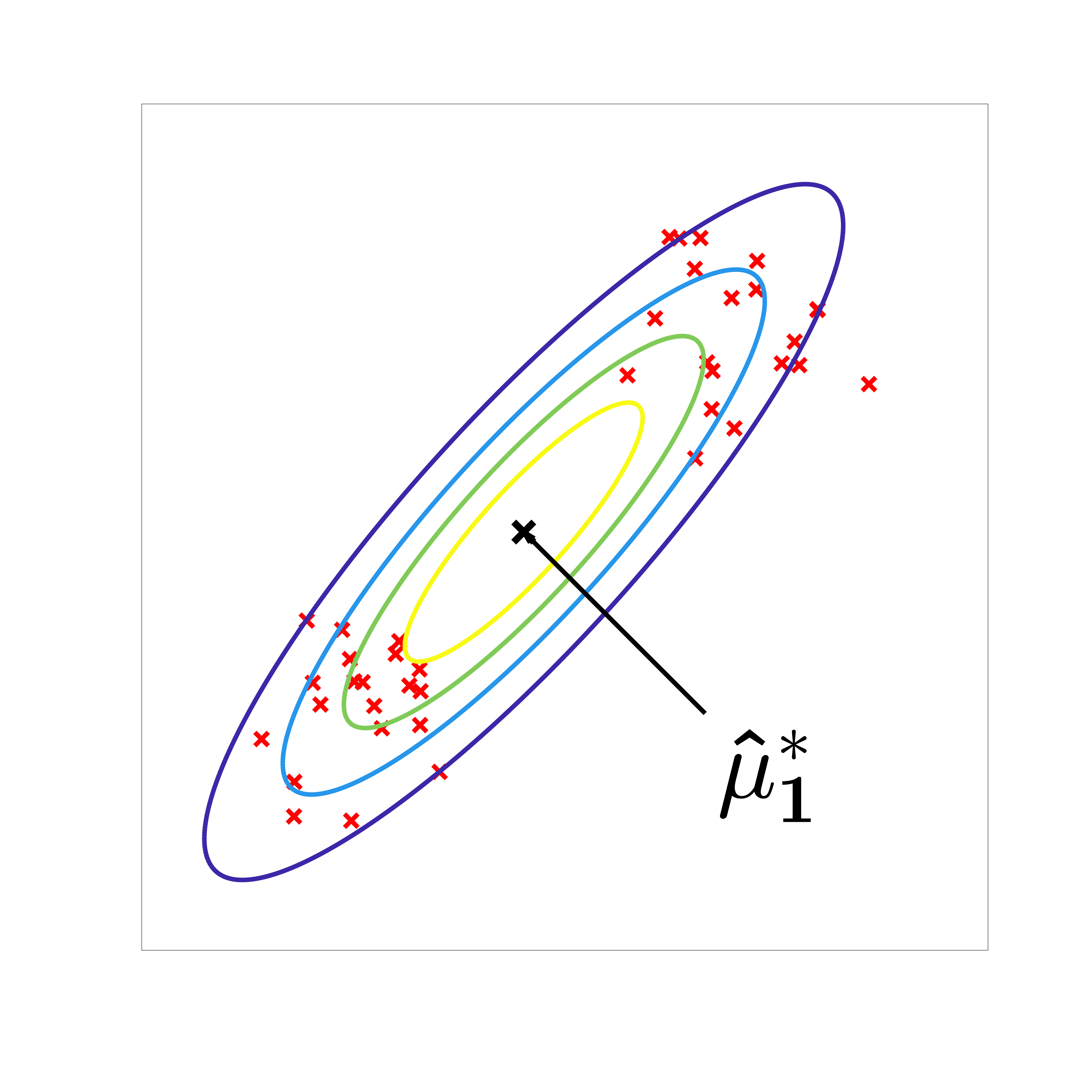}
	\hspace{10pt}
	\includegraphics[width=0.4\linewidth]{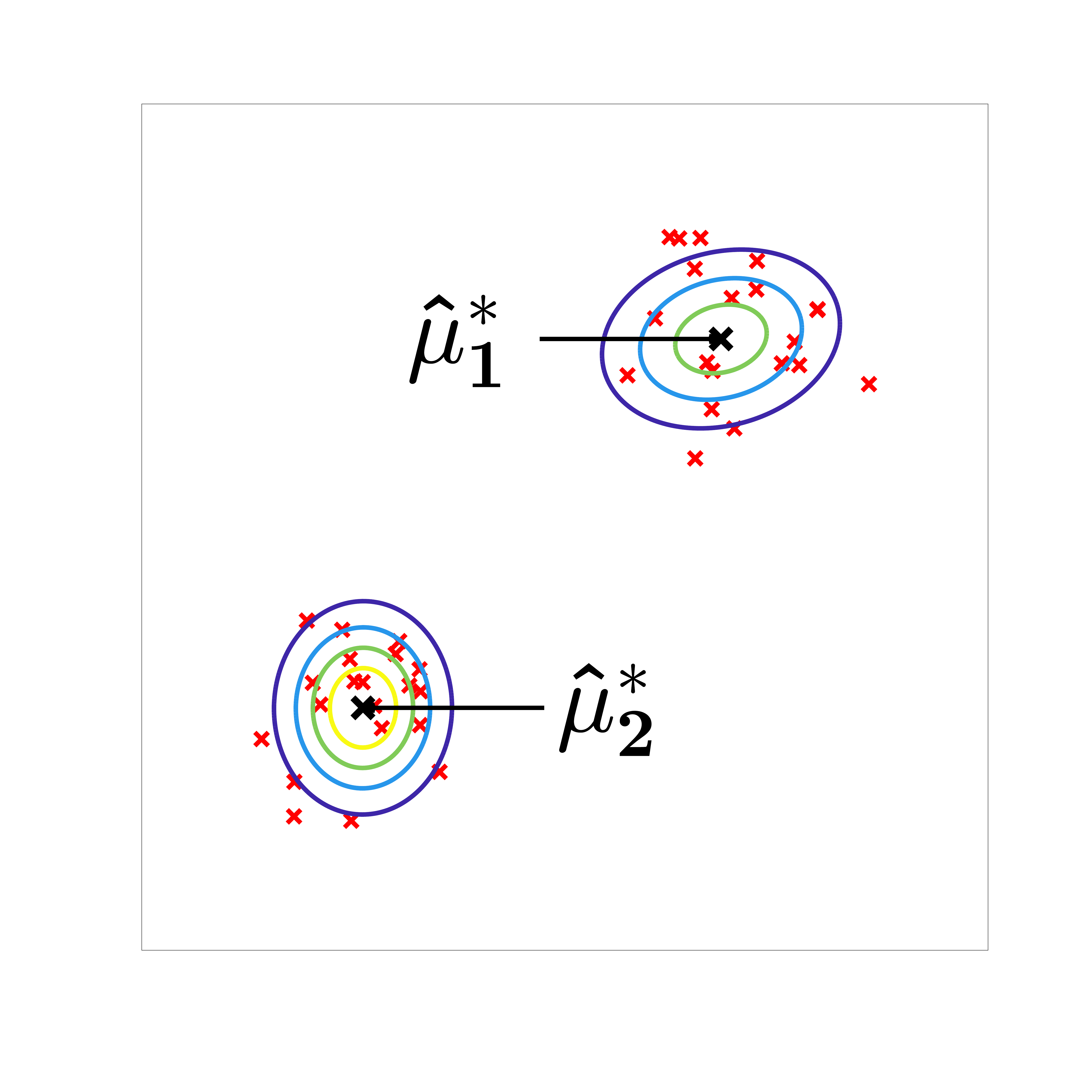}
	\vspace{-10pt}
	\caption{MAP estimate of the means and covariances of a Gaussian mixture model in the cases of $K=1$ and $K=2$ clusters.  If $K$ is itself random, the model has stochastic support as the parameters of the second cluster, e.g.~its mean, only exist when $K=2$. \vspace{-5pt}
	}
	\label{fig:GMM-demo-2}
	\vspace{-5pt}
\end{figure}

In this paper, we aim to address one of the most challenging model characteristics: variables whose very existence is stochastic, often, though not always, leading to the overall dimensionality of the model varying between realizations.
Many practical models posses this characteristic.
For example, many models contain a variable controlling an allowed number of states, such as the number of clusters in a mixture model (see Figure~\ref{fig:GMM-demo-2},~\citealt{richardson1997bayesian, nobile2007bayesian}), or the number of states in a HMM or change point model~\cite{fox2008hdp}.
More generally, many inference problems involve some sort of Bayesian model averaging where the constituent models do not share the exact same set of parameters.
Other models are inherently defined on spaces with non-static support, such as probabilistic context free grammars~(PCFGs)~\cite{manning1999foundations}, program induction models~\cite{perov2014learning}, kernel or function induction models~\cite{schaechtle2016time,janz2016probabilistic}, many Bayesian non-parametric models~\cite{roy2008mondrian,teh2010dirichlet}, and a wide range of simulator--based models~\cite{le2016inference,baydin2019efficient}.

These models can be easily expressed in universal PPSs via branching statements, stochastic loops, or higher-order functions
(see e.g.~Figure~\ref{fig:if-code-trace}).
However, performing inference in them is extremely challenging, with the desire of PPSs to automate this inference complicating the problem further.

A number of automated inference engines have been proposed to provide consistent estimates in such settings~\cite{wingate2011lightweight,wood2014new,tolpin2015maximum,ge2018turing,bingham2018pyro}.
However, they usually only remain effective for particular sub-classes of problems.
In particular, we will show that they can severely struggle on even ostensibly simple problems, often doing no better, and potentially even worse, than importance sampling from the prior.

To address these shortfalls, we introduce a completely new framework---\emph{\textbf{Divide, Conquer, and Combine}} (DCC)---for performing inference in such models.
DCC works by \emph{\textbf{dividing}} the program into separate straight-line sub-programs with fixed support, \emph{\textbf{conquering}} these separate sub-programs using an inference strategy that exploits the fixed support to remain efficient, and then \emph{\textbf{combining}} the resulting sub-estimators to an overall approximation of the posterior.
The main motivation behind this approach is that the difficulty in applying Markov chain Monte Carlo~(MCMC) strategies to such programs lies in the transitioning between variable configurations; within a given configuration efficient inference might still be challenging, but will be far easier than tackling the whole problem directly.
Furthermore, this approach also allows us to introduce meta-strategies for allocating resources between sub-programs, thereby explicitly controlling the exploration-exploitation trade-off for the inference process in a manner akin to~\citet{rainforth2018inference,lu2018exploration}. 
To demonstrate its potential utility, we implement a specific realization of our DCC framework as an automated and general-purpose inference engine in the PPS Anglican~\cite{wood2014new}, finding that it is able to achieve substantial performance improvements and tackle more challenging models than existing approaches.

To summarize, our key contributions are as follows:
\vspace{-3\topsep}
\begin{itemize}[noitemsep]
\item We highlight shortcomings of existing PPS inference engines in the presence of stochastic support;
\item We introduce a new framework, DCC, for performing inference in such problems;
\item We implement a specific realization of this DCC framework in the PPS Anglican.
\end{itemize}
\vspace{-\topsep}

\vspace{-10pt}
\section{Probabilistic Programming}
\label{sec:background-trace}

\begin{figure}[!t]
	\centering
	\vspace{-9pt}
\hspace{5pt}\begin{minipage}{0.4\textwidth}
		\begin{subfigure}
			\centering
			\input{figures/branching-code-example}
		\end{subfigure}%
	\end{minipage}
	\begin{minipage}{0.47\textwidth}
		\begin{subfigure}
			\centering
			\includegraphics[height=0.4\textwidth, width=\textwidth]{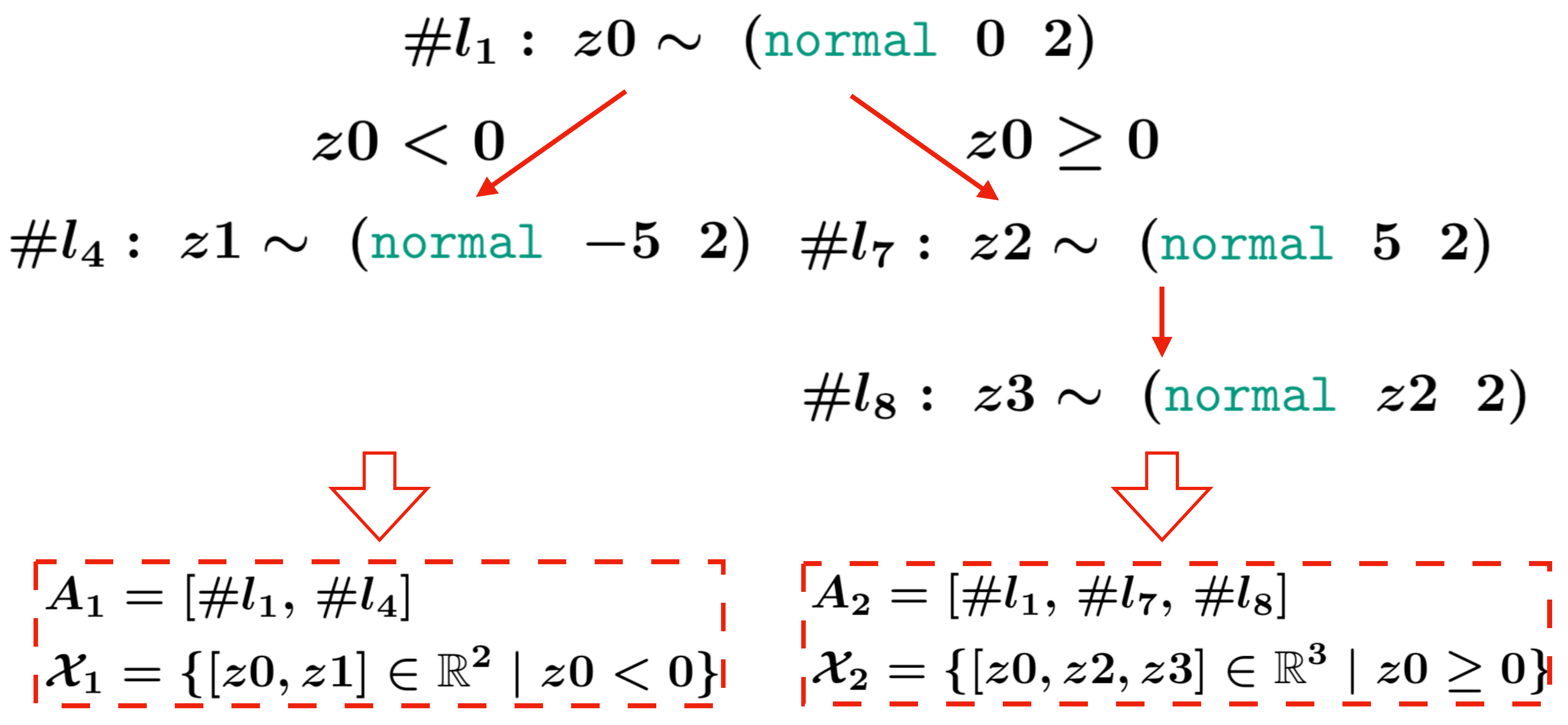}
		\end{subfigure}
	\end{minipage}
	\vspace{-12pt}
	\caption{Example program with stochastic support (top) and its possible execution traces (bottom).
		The two branches of the \lstinline{if} each produce a different sample path, denoted as $A_1$ and $A_2$, and each have a different supports denoted as $\mathcal{X}_1$ and $\mathcal{X}_2$ respectively.  }
	\label{fig:if-code-trace}
	\vspace{-6pt}
\end{figure}

In universal PPSs~\cite{goodman2012church,mansinghka2014venture,wood2014new,bingham2018pyro,ge2018turing}, a program may denote a model with varying support: different realizations of a program may lead to different sets of variables being sampled.  For example, in Figure~\ref{fig:if-code-trace}, there are two possible variable configurations, $[\texttt{z0},\texttt{z1}]$ and $[\texttt{z0},\texttt{z2},\texttt{z3}]$, due to the stochastic control-flow. %

To aid exposition and formalize these programs, 
we will use the formalization of
the particular universal PPS Anglican~\cite{wood2014new, tolpin2016design}, but note that our ideas are applicable to other universal PPSs.
Anglican extends the syntax of Clojure with
two special forms: \lstinline{sample} and \lstinline{observe}, between which the distribution of the program is defined.
\lstinline{sample} statements are used to draw random variables, while \lstinline{observe} statements are used to condition on data.
Informally, they can be thought of as prior and likelihood terms, respectively.

The density of an Anglican program is derived by executing it in a forward manner, drawing from \lstinline{sample} statements when encountered, and keeping track of density components that originate from the \lstinline{sample} and \lstinline{observe} terms.
Specifically, let $\{x_i\}_{i=1}^{n_x} = (x_1, \dots, x_{n_x})$ represent the random variables generated from the encountered \lstinline{sample} statements, where the $i^\text{th}$ statement among them has lexical program address $a_i$, input $\eta_i$, and density $f_{a_i}(x_i|\eta_i)$.
Analogously, let $\{y_j\}_{j=1}^{n_y} = (y_1, \dots, y_{n_y})$ represent the observed values of the $n_y$ \lstinline{observe} statements encountered during execution, which have lexical addresses $b_{j}$ and corresponding densities $g_{b_j}(y_j|\phi_j)$, where $\phi_j$ is analogous to $\eta_i$.
The density is now given by $\pi(x) = \gamma(x)/Z$ where
\vspace{-4pt}
\begin{align}
\label{eq:density-anglican-program}
\gamma(x) &:=
\prod_{i=1}^{n_x} f_{a_i}(x_i|\eta_i)\prod_{j=1}^{n_y} g_{b_j}(y_j|\phi_j), \\
\label{eq:Z-anglican-program}
Z &:=
\int \prod_{i=1}^{n_x} f_{a_i}(x_i|\eta_i)\prod_{j=1}^{n_y} g_{b_j}(y_j|\phi_j) dx_{1:{n_x}},
\end{align}
and the associated reference measure is implicitly defined through the executed \lstinline{sample} statements.
Note that everything here (i.e. $n_x$, $n_y$, $x_{1:n_x}$, $y_{1:n_y}$, $a_{1:n_x}$, $b_{1:n_y}$, $\eta_{1:n_x}$, and $\phi_{1:n_y}$) is a random variable, but each is deterministically calculable given $x_{1:n_x}$ (see \S4.3.2 of \citet{tom-thesis}).

From this, we see that it is sufficient to denote an \emph{\textbf{execution trace}} (i.e.~realization) of an Anglican program by the sequence of the addresses of the encountered \lstinline{sample} statements and the corresponding sampled values, namely  $[a_i, x_i]_{i=1}^{n_x}$.\footnote{Strictly speaking, the addresses $a_i$ can be deterministically derived from the sampled values for a given program.  However, for our purposes it will be convenient to think about first sampling the path $a_{1:n_x}$ and then sampling $x_{1:n_x}$ conditioned on this path.}
For clarity, we refer to the sequence $a_{1:n_x}$ as the \emph{\textbf{path}} of an execution trace and $x_{1:n_x}$ as the \emph{\textbf{draws}}.
A program with \emph{\textbf{stochastic support}} can now be more formally defined as one for which the path $a_{1:n_x}$ varies between realizations: different values of the path correspond to different \emph{{\textbf{configurations}}} of variables being sampled.
\vspace{-2pt}

\section{Shortcomings of Existing Inference Engines}
\label{sec:inf-eng}
\vspace{-2pt}

In general, existing inference engines that can be used for (at least some) problems with stochastic support can be grouped into five categories: 
importance/rejection sampling, 
particle based inference algorithms (e.g.~SMC, PG, PIMH, PGAS, IPMCMC, RM-SMC, PMMH, SMC$^2$), 
MCMC approaches with automated proposals (e.g.~LMH, RMH), 
MCMC approaches with user--customized proposals (e.g.~RJMCMC), and variational approaches (VI, BBVI).
More details on each are provided in Appendix~\ref{sec:supp-discussion-PPSs}.

Importance/rejection sampling approaches can straightforwardly be applied in stochastic support settings by using the prior as the proposal, but their performance deteriorates rapidly as the dimensionality increases. 
Particle--based approaches offer improvements for models with sequential structure~\citep{wood2014new}, but become equivalent to importance sampling when this is not the case.

Some variational inference (VI) approaches can, at least 
in theory, be used in the presence of stochastic support~\citep{bingham2018pyro,cusumano2019gen}, but this can require substantial problem--specific knowledge to construct a good guide function, 
requires a host of practical issues to be overcome~\citep[Chapter 6]{paige2016automatic}, and
is at odds with PPSs desire for automation. 
Furthermore, these approaches produce inconsistent estimates and current engines often give highly unsatisfactory practical performance as we will show later.
When using such methods, it is common practice to side--step the complications originating from stochastic support by approximating the model with a simpler one with fixed support.
For example, though Pyro can ostensibly support running VI in models with stochastic support, their example implementation of a Dirichlet process mixture model ({\small \url{https://pyro.ai/examples/dirichlet_process_mixture.html}}) uses fixed support approximations by assuming a bounded number of mixture components before stripping clusters away. %

Because of these issues, arguably the current go--to approaches for programs with stochastic support are specialized MCMC approaches~\cite{wingate2011lightweight,roberts2019reversible}.
However, these are themselves far from perfect and, as we show next, they often given performance far worse than one might expect.

\begin{figure}[!t]
	\centering
	\includegraphics[width=\linewidth]{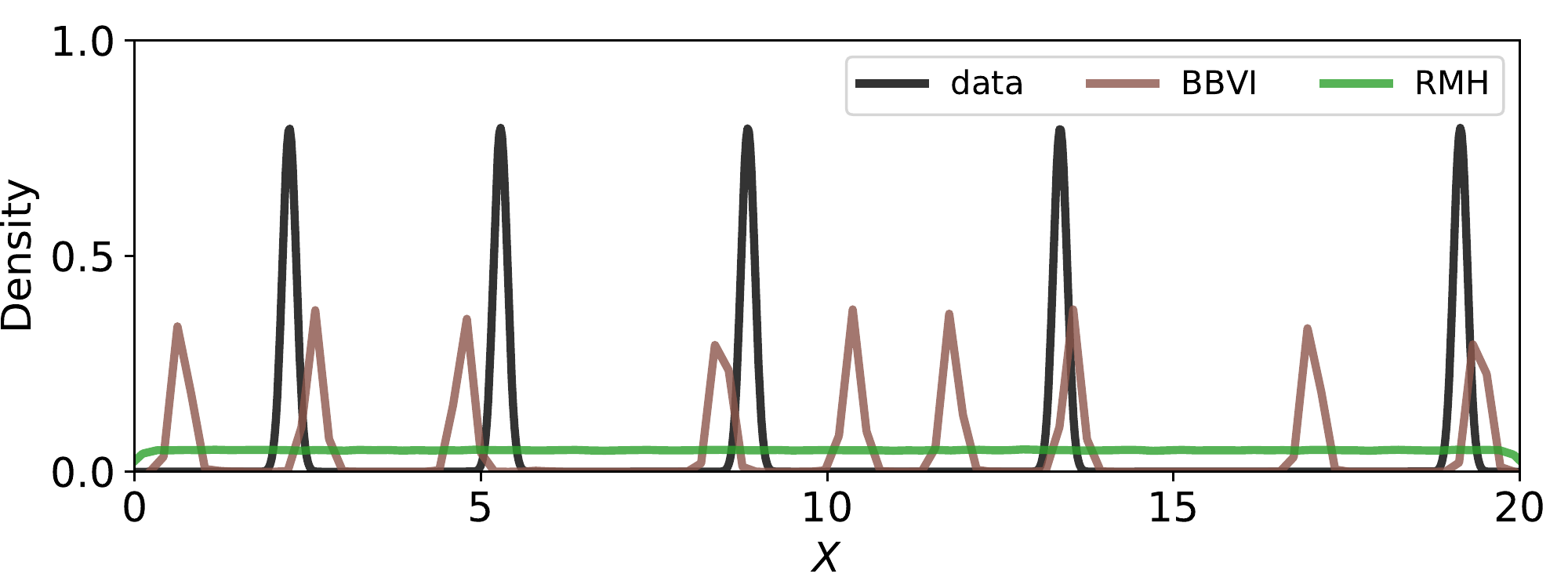}
	\vspace{-20pt}
	\caption{Kernel density estimation of the synthetic data (black) of the univariate mixture model and the posterior predictive distribution from BBVI (brown) and RMH (green) in Anglican.
	}
	\label{fig:GMM-demo}
	\vspace{-5pt}
\end{figure}
\vspace{-8pt}

\paragraph{A demonstrative example} 
Consider the following simple mixture model with an unknown number of clusters $K$
\begin{align*}
\vspace{-10pt}
K &\sim \; \mathrm{Poisson}(9){+}1, \;
\mu_{k} \sim \mathrm{Uniform}\bigg({\frac{20(k-1)}{K}},\, \frac{20k}K \bigg), \displaybreak[0]
\\ 
z_{n} &\sim \mathrm{Cat}(\{1/K,...,1/K\}),\;
 y_n  \sim  \mathcal{N}(\mu_{z_n}, 0.1^2).  
  \vspace{-10pt}
\end{align*}
Here $\mu_{1:K}$ are the cluster centers, 
$z_{1:N}$ are the cluster assignments, and
$y_{1:N}$ is the observed data.
When conducting inference, we can analytically marginalize out the cluster assignments $z_{1:N}$ and perform inference on $K$ and $\mu_{1:K}$ only.
However, as the prior on $K$ is a Poisson distribution, the number of parameters in the model is unbounded.
Using the model itself, we generated a synthetic dataset of $y_{1:150}$ for an one-dimensional mixture of five clusters (i.e. $K=5$).

We now wish to perform inference over both the number of clusters $K$ and the cluster means $\mu_{1:K}$, so that we can make predictions from the posterior predictive distribution.
Two approaches we might try are VI, for which we use Anglican's black-box variational inference~(BBVI) implementation~\citep{ranganath2014black,paige2016automatic}, and MCMC, for which we take its RMH algorithm~\cite{le2015rmh}, a derivative of the single-site MH algorithm of~\citep{wingate2011lightweight} (see below).
Unfortunately, as we see in Figure~\ref{fig:GMM-demo}, both approaches fail spectacularly and produce posterior predictive distributions that bare little resemblance to the data.
In particular, they fail to properly encapsulate the number of clusters.
As we will show in \S\ref{sec:gmm}, importance sampling and particle--based inference engines fare no better for this problem.
In fact, we are not aware of any fully \emph{automated} inference engine that is able to give satisfactory performance, despite the apparent simplicity of the problem.

\subsection{Why is MCMC so Hard with Stochastic Support?} 
To run MCMC on a program with stochastic support, one needs to be able to construct a transition kernel that is able to switch between the configurations; many popular MCMC methods, like  Hamiltonian Monte Carlo~(HMC), cannot be applied.
One can either look to construct this transition kernel manually through a carefully chosen user--specified trans--dimensional proposal and then using a reversible jump MCMC~(RJMCMC) scheme~\cite{green1995reversible, green2003trans,roberts2019reversible,cusumano2019gen}, 
or use a general--purpose kernel that works on all models~\cite{wingate2011lightweight, dippl}. %

The predominant approaches for the latter are the single-site MH (a.k.a.~LMH) algorithm~\cite{wingate2011lightweight} and its extensions~\cite{yang2014generating,le2015rmh,TolpinMPW15,RitchieSG16}.
LMH is based around a Metropolis-within-Gibbs (MwG) approach on the program traces~\cite{brooks2011handbook}, whereby one first samples a variable in the execution trace, ${i\in1:n_x}$, uniformly at random and then proposes a MwG transition to this variable, $x_i \to x_i'$, followed by an accept/reject step.
Anglican's RMH is a particular case of this LMH approach where the proposal is a mixture of resampling $x_i$ from the prior $f_{a_i}(x_i|\eta_i)$ %
and a local random walk proposal $p(x'_i | x_i)$.

The problem with LMH approaches is that 
if the transition of $x_i$ influences the downstream control flow of the program, the downstream draws no longer produce a valid execution trace and so must be redrawn, typically using the prior.
This can cause the mixing of the sampler over configurations to become extremely slow; the need to transit between configurations bottlenecks the system. 

This problem is also far from specific to the exact transition kernel used by LMH samplers: it is also extremely challenging to hand--craft RJMCMC proposals to be effective.
Namely, proposing changes in the configuration introduces new variables that might not be present in the current configuration, such that our proposal for them effectively becomes an importance sampling proposal.
Furthermore, the posterior on the other variables may shift substantially when the configurations changes.

In short, one loses a notion of locality: having a sample in a high density region of one configuration typically provides little information about which regions have a high density for another configuration.
For example, in a mixture model shown in Figure~\ref{fig:GMM-demo-2}, having a good characterization of $\mu_1 | {K=1}$ provides little information about the distribution of $\mu_1 | {K=2}$, as shown by the substantial change in their mode, $\mu_1^*$. %
It is thus extremely difficult to design proposals which maintain a high acceptance rate when proposing a new configuration: once in a high density region of one configuration, it becomes difficult to switch to another configuration.
This problem is further compounded by the fact that RJMCMC only estimates the relative mass of each configuration through the relative frequency of transitions, giving a very slow convergence for the overall sampler.

\section{Divide, Conquer, and Combine}
\label{sec:method}

The challenges for running MCMC methods on programs with stochastic support stem from the difficultly in transitioning {\emph{between}} configurations properly.
To address this, we now introduce a completely new inference framework for these programs:
\emph{\textbf{Divide, Conquer, and Combine}}~(DCC).
Unlike most existing inference approaches which directly target the full program density (i.e.~\eqref{eq:density-anglican-program}), DCC breaks the problem into individual sub-problems with \emph{\textbf{fixed}} support and tackles them separately.
Specifically, it \emph{\textbf{divides}} the overall program into separate straight-line sub-programs according to their execution paths, \emph{\textbf{conquers}} each sub-program by running inference locally,
and \emph{\textbf{combines}} the results together to form an overall estimate
in a principled manner. 

In doing this, DCC transfers the problem of designing an MCMC proposal which both efficiently transitions between paths (i.e.~varying configurations) and mixes effectively over the draws on that path, to that of a) performing inference locally over the draws of each given path, and b) learning the relative marginal posterior mass of these paths.
This separation brings the benefit that the inference for a given path can typically be performed much more efficiently than when using a global sampler, as it can exploit the fixed support and does not need to deal with changes in the variable configuration.
Furthermore, it allows the relative posterior mass to be estimated more reliably than with global MCMC schemes, for which this is estimated implicitly through the relative frequency of the, typically infrequent, transitions.

We now explain the general setup for each component of DCC.
Specific strategies for each will be introduced in \S\ref{sec:method-PPS}, while an overview of the approach is given in Algorithm~\ref{alg:dcc}.

\subsection{Divide}
\label{sec:method-divide}
The aim of DCC's divide step is to split the given probabilistic program into its constituent straight-line programs (SLPs), where each SLP is a partition of the overall program corresponding to a particular sequence of sample addresses encountered during execution, i.e.~a particular path $a_{1:n_{x}}$.
Each SLP has a fixed support as the set of variables it draws are fixed by the path, i.e.~the program draws from the same fixed set of \lstinline|sample| statements in the same order.

Introducing some arbitrary indexing for the set of SLPs, we use $A_k$ to denote the path for the $k^{\text{th}}$ SLP (i.e.~$a_{1:n_{x,k}}=A_k$ for every possible realization of this SLP).
The set of of all possible execution paths is now given by $A = \{A_k\}_{k=1}^K$, where $K$ must be countable (but need not be finite).
For the example in Figure~\ref{fig:if-code-trace}, this set consists of two paths $A_1 = [\#l_1,\#l_4]$ and $A_2 = [\#l_1, \#l_7, \#l_8]$, where we use $\#l_j$ to denote the lexical address of the \lstinline|sample| statement is on the $j^{\text{th}}$ line. 
Note that, for a given program, each SLP is uniquely defined by its corresponding path $A_k$; we will sometimes use $A_k$ to denote an SLP.

Dividing a program into its constituent SLPs implicitly partitions the overall target density into disjoint regions,
with each part defining a sub-model on the corresponding sub-space.
The unnormalized density $\gamma_{k}(x)$ of the SLP $A_k$
is defined with respect to the variables $\{x_i\}_{i=1}^{n_{x,k}}$ that are paired with the 
addresses $\{a_i\}_{i=1}^{n_{x,k}}$ of $A_k$ (where we have used the notation $n_{x,k}$ to emphasize that this is now fixed).
We use $\mathcal X_k$ to denote its corresponding support.
Note that the union of all the $\mathcal X_k$ is the support of the original program,
$\mathcal{X} = \bigcup_{k=1}^K  \mathcal X_k $.
Analogously to~\eqref{eq:density-anglican-program}, we now have that the density of SLP $k$ is 
$\pi_k(x) = \gamma_{k}(x)/Z_k$  where
\begin{align}
\vspace{-10pt}
\gamma_{k}(x) 
:=& \, \gamma(x){\mathbb{I}[x\in \mathcal X_k]} \nonumber \\
=& \, \mathbb{I}[x \in \mathcal X_k] \prod_{i=1}^{n_{x,k}} f_{A_k[i]}(x_i|\eta_i)\prod_{j=1}^{n_{y}} g_{b_j}(y_j|\phi_j),
\label{eq:density-straightline-program}\\
Z_k :=& \, \int_{x\in \mathcal{X}_k} \gamma_{k}(x) dx.
\label{eq:Z-straightline-program}
\end{align} 
Unlike for~\eqref{eq:density-anglican-program}, $n_{x,k}$ and $A_k$ are now, critically, deterministic variables so that the support of the problem is fixed.
Though $b_j$ and $n_{y}$ may still be stochastic, they do not effect the reference measure of the program 
and so this does not cause a problem when trying to perform MCMC sampling.

Following our example in Figure~\ref{fig:if-code-trace}, for $A_1$ we have $x_{1:2} = [z0, z1]$, $\mathcal{X}_1\!=\!\{[x_1, x_2]\!\in\!\mathbb{R}^2\!\mid\!{x_1<0}\}$, and $ \gamma_{1}(x) = \mathcal{N}(x_1;0,2)\mathcal{N}(x_2;-5,2)\mathcal{N}(y_1;x_2,2)\mathbb{I}[x_1\,{<}\,0]$.
For $A_2$, we instead have $x_{1:3}=[z0, z2, z3]$, $\mathcal{X}_2 = \{[x_1, x_2, x_3] \in \mathbb{R}^3 \mid x_1 \geq 0\}$ 
and $ \gamma_{2}(x) = \mathcal{N}(x_1;0,2)\mathcal{N}(x_2;5,2)\mathcal{N}(x_3;x_2,2)\mathcal{N}(y_1;x_3,2)\mathbb{I}[x_1\!\geq\!0]$.

To implement this divide step, we now need a mechanism for establishing the SLPs.
This can either be done by trying to extract them all upfront, or by dynamically discovering them as the inference runs, see \S\ref{sec:method-PPS-divide}.

\subsection{Conquer}
\label{sec:method-conquer}

Given access to the different SLPs produced by the divide step, DCC's conquer step looks to carry out the %
local inference for each.
Namely, it aims to produce a set of estimates for the individual SLP densities $\pi_{k}(x)$ %
and the corresponding marginal likelihoods $Z_k$.
As each SLP has a fixed support, this can be achieved with conventional inference approaches, with a large variety of methods potentially suitable.
Note that $\pi_{k}(x)$ and $Z_k$ need not be estimated using the same approach, e.g. we may use an MCMC scheme to estimate $\pi_{k}(x)$ and then introduce a separate estimator for $Z_k$.
One possible estimation strategy is given in~\S\ref{sec:method-PPS-conquer-estimator}.

An important component in carrying out this conquer step effectively is to note that it is not usually necessary to obtain estimates of equally-high fidelity for all SLPs.
Specifically, SLPs with small marginal likelihoods $Z_k$ only make a small contribution to the overall density and thus do not require as accurate estimation as SLPs with large $Z_k$.
As such, it will typically be beneficial to carry out \emph{\textbf{resource allocation}} as part of the conquer step, that is, to generate our estimates in an online manner where at each iteration we use information from previous samples to decide the best SLP(s) to update our estimates for.
See~\S\ref{sec:method-PPS-conquer-resource} for one possible such strategy.

\subsection{Combine}
\label{sec:method-combine}

The role of DCC's combine step is to amalgamate the local estimates from the individual SLPs to an overall estimate of the  distribution for the original program.
For this, we can simply note that, because the supports of the individual SLPs are disjoint and their union is the complete program, we have
$\gamma(x) = \sum_{k=1}^{K} \gamma_{k}(x)$ and $Z = \sum_{k=1}^{K} Z_k$,
such that the unnormalized density and marginal likelihoods are both additive.
Consequently, we have 
\vspace{-2pt}
\begin{align}
\pi(x) 
&= \frac{\sum_{k=1}^K{\gamma_{k}(x)}}{\sum_{k=1}^{K} Z_k} 
=\frac{\sum_{k=1}^K{Z_k \pi_{k}(x)}}{\sum_{k=1}^{K} Z_k} \nonumber \\
&\approx
\frac{\sum_{k=1}^K{\hat{Z}_k \hat{\pi}_{k}(x)}}{\sum_{k=1}^{K} \hat{Z}_k} =: \hat{\pi}(x)
\label{eq:comb_est}
\end{align}
where $\hat{\pi}_{k}(x)$ and $\hat{Z}_k$ are the SLP estimates generated during the conquer step.
Note that, by proxy, this also produces the overall marginal likelihood estimate
$\hat{Z} := \sum_{k=1}^{K}  \hat{Z}_k$.

\begin{algorithm}[t]
	\caption{Divide, Conquer, and Combine (DCC) \label{alg:dcc}}
	\begin{algorithmic}[1]
		\floatname{algorithm}{Procedure}
		\renewcommand{\algorithmicrequire}{\textbf{Input:}}
		\renewcommand{\algorithmicensure}{\textbf{Output:}}	
		\renewcommand{\COMMENT}[1]{\hfill$\triangleright$ {#1}}
		
		\REQUIRE
		Program $\boldsymbol{\prog}$, number of iterations $T$
		\ENSURE 
		Posterior approx $\hat{\pi}$, ML estimate $\hat{Z}$
		
		\STATE Obtain initial set of discovered SLPs $\hat{A}$ \COMMENT{ \S\ref{sec:method-divide}, \S\ref{sec:method-PPS-divide}}
		\STATE Compute initial estimates $\forall A_k\in\hat{A}$%
		\COMMENT{\S\ref{sec:method-conquer}, \S\ref{sec:method-PPS-conquer-estimator}}
		\FOR{$t=1,\dots,T$}
		\STATE 
		Choose an SLP $A_{k} \in \hat{A}$ to update
		\COMMENT{\S\ref{sec:method-PPS-conquer-resource}}
		
		\STATE Update local estimates $\hat\pi_{k}$ and $\hat Z_{k}$
		\COMMENT{\S\ref{sec:method-conquer}, \S\ref{sec:method-PPS-conquer-estimator}}
		
		\STATE  {[Optional]} Look for undiscovered SLPs (e.g.~using a global proposal), add any found to $\hat{A}$ 
		\COMMENT{\S\ref{sec:method-PPS-divide}}
		
		\ENDFOR
		
		\STATE Combine local approximations as per~\eqref{eq:comb_est}
		\COMMENT{\S\ref{sec:method-combine}}
	\end{algorithmic}
\end{algorithm}

When using an MCMC sampler for $\pi_k(x)$, $\hat{\pi}_k(x)$ will take the form of an empirical measure comprising of a set of samples, i.e. $\hat{\pi}_k(x) = \frac{1}{N_k} \sum_{m=1}^{N_k} \delta_{\hat{x}_{k,m}}(x)$.
If we use an importance sampling or particle filtering based approach instead, our empirical measure will compose of weighted samples.
We note that in this case, the $\hat{Z}_k$ term in the numerator of~\eqref{eq:comb_est} will cancel with any potential self-normalization term used in $\hat{\pi}_{k}(x)$, such that we can think of using the estimate
$\pi(x) \approx (\sum_{k=1}^K{ \hat{\gamma}_{k}(x)})/ (\sum_{k=1}^{K} \hat{Z}_k)$.

\section{Theoretical Correctness}
\label{sec:theory}

We now demonstrate that the outlined general DCC approach is consistent (as $T\to\infty$ where $T$ is the number of iterations) given some simple assumptions about the individual component strategies.
At a high level, these assumptions are that the estimators used for each SLP, $\hat{\pi}_k$ and $\hat{Z}_k$, are themselves consistent, we use an SLP extraction strategy that will eventually uncover all of the SLPs with finite probability mass, and our resource allocation strategy selects each SLP infinitely often given an infinite number of iterations.

More formally we have the following result
\begin{restatable}{theorem}{constistency}
	If Assumptions~\ref{assumption1}-\ref{assumption5} in Appendix~\ref{sec:app:theory} hold, then the empirical measure, $\hat{\pi}\left(\cdot\right)$, produced by DCC converges weakly to the conditional distribution of the program in the limit of large number of iterations $T$ :
	\vspace{-5pt}
	$$
	\hat{\pi}\left(\cdot\right) \dto \pi \left(\cdot\right) \quad \text{as} \quad T\to\infty.
	$$
\end{restatable}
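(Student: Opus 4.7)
The plan is to establish the weak convergence $\hat{\pi} \dto \pi$ by showing that $\int f\, d\hat{\pi} \to \int f\, d\pi$ for every bounded continuous test function $f$. Writing the DCC estimator as $\int f\, d\hat{\pi} = (\sum_{k \in \hat{A}_T} \hat{Z}_k \int f\, d\hat{\pi}_k)/(\sum_{k \in \hat{A}_T} \hat{Z}_k)$, the strategy is to treat the numerator and denominator separately and then apply a Slutsky-type argument, provided both the countable sum over SLPs and the inner per-SLP estimators are well behaved in the limit.

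First I would handle the per-SLP piece. By Assumption (on consistency of the conquer estimators), for every fixed $k$ the local estimators satisfy $\hat{Z}_k \to Z_k$ and $\hat{\pi}_k \dto \pi_k$ as the number of local iterations $N_k \to \infty$. The resource allocation assumption guarantees that $N_k(T) \to \infty$ as $T \to \infty$ for every $k$ that has been discovered, so $\hat{Z}_k \int f\, d\hat{\pi}_k \to Z_k \int f\, d\pi_k$ termwise for each such $k$. The SLP discovery assumption then guarantees that every $A_k$ with $Z_k > 0$ is eventually added to $\hat{A}_T$ in finite time almost surely.

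Second, I would deal with the countable tail. The crux is to exchange limits in $\sum_{k \in \hat{A}_T}$ and $T \to \infty$. The clean way is a dominated--convergence / $\epsilon$--truncation argument: fix $\epsilon > 0$, pick a finite index set $\mathcal{K}_\epsilon$ such that $\sum_{k \notin \mathcal{K}_\epsilon} Z_k < \epsilon$, and then split
\begin{equation*}
\sum_{k \in \hat{A}_T} \hat{Z}_k \int f\, d\hat{\pi}_k = \sum_{k \in \mathcal{K}_\epsilon} \hat{Z}_k \int f\, d\hat{\pi}_k + \sum_{k \in \hat{A}_T \setminus \mathcal{K}_\epsilon} \hat{Z}_k \int f\, d\hat{\pi}_k.
\end{equation*}
For $T$ large enough, $\mathcal{K}_\epsilon \subseteq \hat{A}_T$, and the first sum converges to $\sum_{k \in \mathcal{K}_\epsilon} Z_k \int f\, d\pi_k$ by the termwise argument above. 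The second sum is bounded in absolute value by $\|f\|_\infty \sum_{k \notin \mathcal{K}_\epsilon} \hat{Z}_k$, which I would need to control using an assumption (or derive from the conquer-step consistency) that the marginal-likelihood estimates do not inflate the tail mass in the limit, i.e.\ $\limsup_T \sum_{k \in \hat{A}_T \setminus \mathcal{K}_\epsilon} \hat{Z}_k \le \sum_{k \notin \mathcal{K}_\epsilon} Z_k + o(1)$. The same truncation applied to the denominator shows $\sum_{k \in \hat{A}_T} \hat{Z}_k \to Z$. Sending $\epsilon \to 0$ at the end closes the argument, and the continuous mapping theorem applied to the ratio finishes the proof by taking $f$ to range over a convergence-determining class.

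The main obstacle is exactly the tail control for the infinite-SLP case: termwise convergence of $\hat{Z}_k$ is not by itself sufficient to conclude that $\sum_k \hat{Z}_k \to \sum_k Z_k$, and one genuinely needs either a uniform-integrability-style hypothesis in Assumption~\ref{assumption5} or a dominating envelope coming from the way unexplored SLPs are handled by the divide step. Everything else---per-SLP consistency, the weak-limit combination, and the ratio via Slutsky---is essentially routine once this tail bound is in hand.
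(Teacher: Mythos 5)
Your core argument---testing against bounded continuous $f$, splitting the estimator into numerator and denominator, using termwise convergence of $\hat{Z}_k \int f\, d\hat{\pi}_k$ from the local-consistency and resource-allocation assumptions, and closing with Slutsky---is exactly the paper's proof. Where you diverge is in the treatment of the sum over SLPs: you treat it as genuinely countably infinite and build an $\epsilon$-truncation / tail-control argument, correctly observing that termwise convergence of the $\hat{Z}_k$ does not by itself give convergence of the infinite sum and that some uniform tail bound would be needed. The paper sidesteps this entirely: its Assumption~\ref{assumption1} stipulates that the number of sub-programs $K$ is \emph{finite}, which it justifies by noting one can always lump every SLP whose path length exceeds a threshold $n_{\text{thresh}}$ into a single catch-all sub-program (handled by, e.g., prior importance sampling), so that the sums in the combine step are finite and Slutsky applies directly with no exchange-of-limits issue. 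In effect, the tail-control hypothesis you flag as missing is precisely what Assumption~\ref{assumption1} is there to supply, just packaged as a finiteness reduction rather than a uniform-integrability condition; your version is more general but requires an assumption the paper does not state, while the paper's version is more elementary at the cost of the (practically innocuous) lumping construction. One further small point: the paper's Assumption~\ref{assumption3} also requires $\hat{Z}_k = 0$ almost surely whenever $Z_k = 0$, which is what lets discovered-but-massless SLPs be ignored in the limit; your argument implicitly needs this too when restricting attention to $k$ with $Z_k > 0$.
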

\vspace{-5pt}
The proof is provided in Appendix~\ref{sec:app:theory}.
We note that it is typically straightforward to ensure that these assumptions hold; the specific approaches we outline next satisfy them.

\section{DCC in Anglican}
\label{sec:method-PPS}

We now outline a particular realization of our DCC framework.
It is implemented in Anglican and can be used to run inference automatically for any valid Anglican program.
As part of this, we suggest particular strategies for the individual components left unspecified in the last section, but emphasize that these are far from the only possible choices; DCC should be viewed more as a general framework.
Additional details including a complete algorithm block are given in the appendices.

\subsection{Local Estimators}
\label{sec:method-PPS-conquer-estimator}

Recall that the goal for the local inference is to estimate the local target density $\pi_k(x)$ 
and the local marginal likelihood $Z_k$.
Straightforward choices include (self-normalized) importance sampling and SMC as both return a marginal likelihood estimate $\hat{Z}_k$.
However, knowing good proposals for these a priori is challenging and, as we discussed in \S\ref{sec:inf-eng}, na\"{i}ve choices 
are unlikely to perform well.

Thankfully, each SLP has a fixed support, which means many of the complications that make inference challenging for universal PPSs no longer apply.
In particular, we can use conventional MCMC samplers---such as MH, HMC, or MwG---to approximate $\pi_k(x)$.
Due to %
 the fact that individual variable types may be unknown or not even fixed, we have elected to use MwG in our implementation, but note that more powerful inference approaches like HMC may be preferable when they can be safely applied.
To encourage sample diversity and assist in estimating $Z_k$ (see below), we further run $N$ independent MwG samplers for each SLP.

As MCMC samplers do not directly provide an estimate for $Z_k$, we must introduce a further estimator that uses these samples to estimate it.
For this, we use PI-MAIS~\cite{martino2017layered}.
Details are given in Appendix~\ref{sec:app:local-inf}.

\subsection{Discovering SLPs}
\label{sec:method-PPS-divide}

To divide a given model into its constituent sub-models expressed by SLPs, we need a mechanism for discovering them automatically. 
One possible approach would be to analyze the source code of the program using static analysis techniques~\cite{chaganty2013efficiently, nori2014r2}, 
thereby extracting the set of possible execution paths of the program at compilation time. 
Though potentially a viable choice in some scenarios, this can be difficult to achieve for all possible programs in a universal PPS. For example, the number of possible paths may be unbounded.
We therefore take an alternative approach that discovers SLPs dynamically at run--time as part of the inference.
In general, it maintains a set of SLPs encountered so far, and \emph{remembers} any new SLP discovered by the MCMC proposals at each iteration.

Our approach starts by executing the program forward for $T_0$ iterations to generate some sample execution traces from the prior. 
The paths traversed by these sampled traces are recorded, and our set of SLPs is initialized as that of these recorded paths.
At subsequent iterations, after each local inference iteration, 
we then perform one \emph{{global}} MCMC step
based on our current sub-model and trace, producing a new trace with path $A_{k'}$ that may or may not have changed. 
If $A_{k'}$ corresponds to an existing SLP, this sample is discarded (other than keeping count of the number of proposed transitions into each SLP).
However, if it corresponds to an unseen path, it is added to our set of SLPs as a new sub-model,
followed by $T_w$ MCMC steps restricted to that path to burn-in. 
The sample of the final step will be stored as initiation for the future local inference on that path.

\begin{figure*}[t]
	\centering
	\includegraphics[width=0.38\textwidth,align=c]{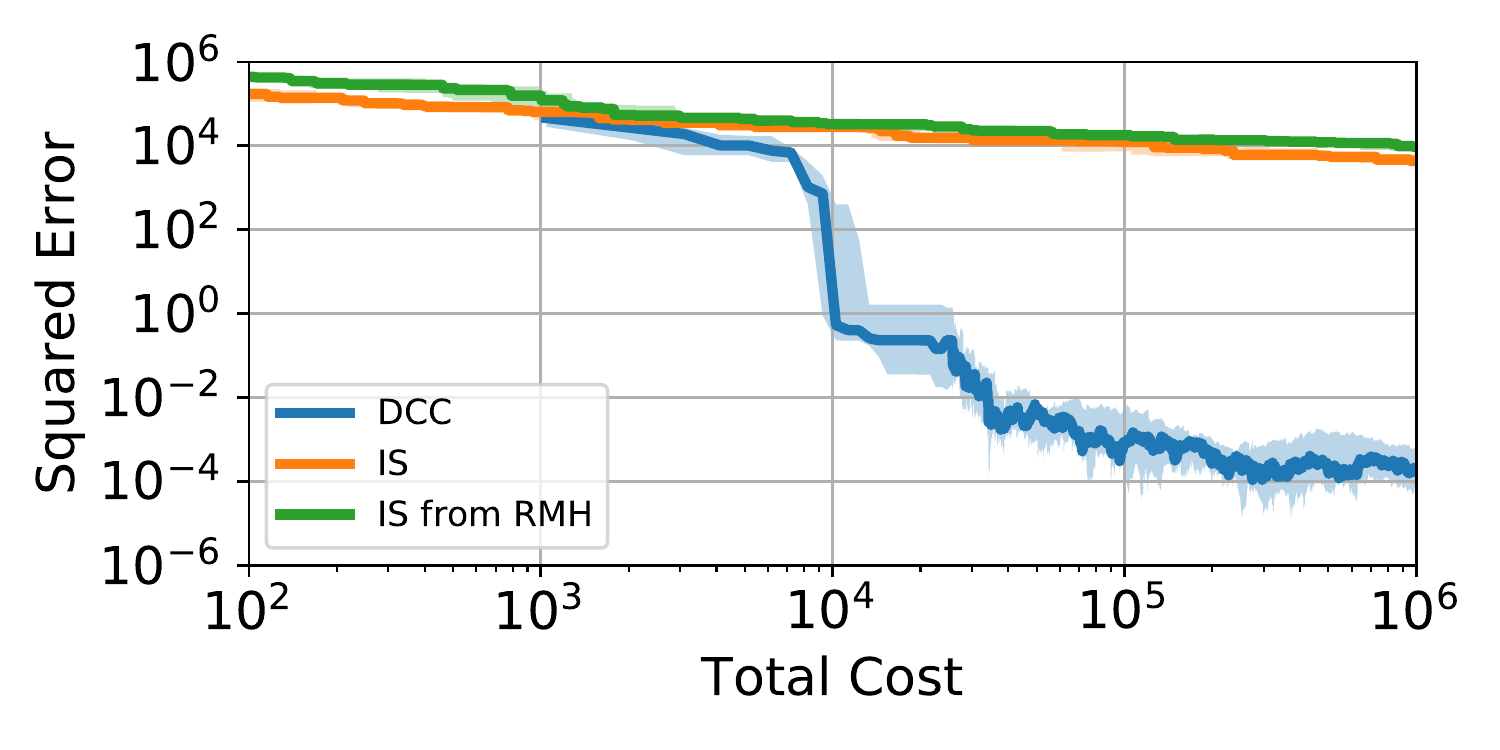}
	~~~~~~~~~~~~
\includegraphics[width=0.41\textwidth,align=c]{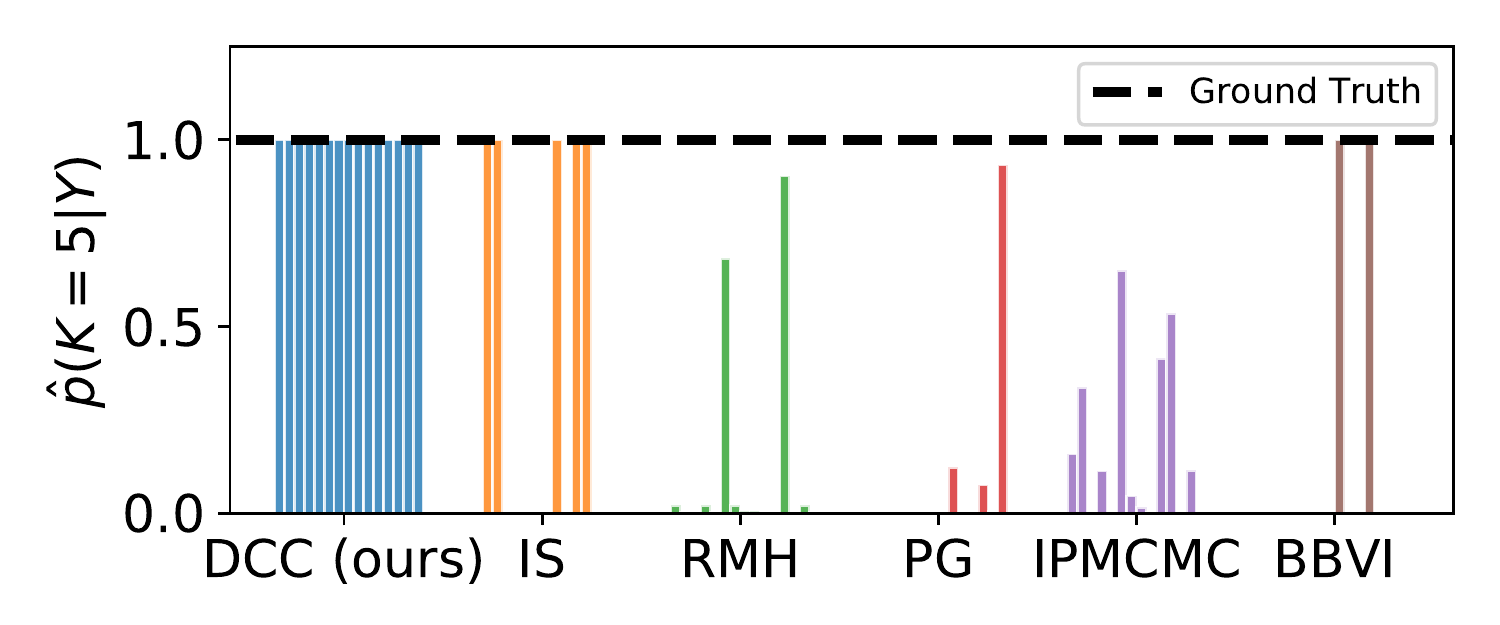}
	\caption{
		Results for DCC and baselines  for the GMM example outlined in \S\ref{sec:inf-eng}.
		[Left] Convergence in squared error in log marginal likelihood estimate ${|\!|}\log \hat{Z} - \log Z_{\mathit{true}}{|\!|}^2$.
		The solid line corresponds to the median across $15$ runs and the shading region $25\%-75\%$ quantiles.
		Note that none of IS, RMH, PG, IPMCMC, and BBVI provide such at estimate, hence their omission; an additional baseline of drawing importance samples from a proposal centered on and RMH chain was considered instead.
		[Right] Final estimates for  $p(K = 5\,|\,y_{1:N_y})$ for each of the $15$ runs, for the ground truth is roughly $0.9998$.
		In both cases, the ground truth was estimated using a very large number of importance samples with a manually adapted proposal.
		We see that DCC substantially outperforms the baselines.
	}
	\label{fig:GMM}
\end{figure*}

The key difference between our strategy and running a single global MCMC sampler (e.g. RMH), is that we do not need this new sample to be \emph{{accepted}} for the new SLP to be ``discovered''. 
Because, as we explained in \S~\ref{sec:inf-eng}, making effective proposals into a high--density region of a new configuration is very challenging, it is unlikely that the new trace sample we propose has high density: even if it corresponds to a path with large $Z_k$, we are unlikely to immediately sample a good set of draws to accompany it.
Therefore, the global MCMC sampler is very likely to miss or \emph{\textbf{forget}} new SLPs since it accepts/rejects movements based on one sample.

DCC, on the other hand, overcomes this problem by first \emph{\textbf{remembering}} the path information of any newly proposed SLP, 
and then carrying out a few local warm-up iterations, before deciding whether an SLP is a promising sub-model or not in later exploration.
As a result, DCC does not suffer from the reliance on forward sampling as per RMH to discover new SLPs.
Moreover, our scheme inherits the hill-climbing behavior of MCMC to discover SLP in a more efficient way:
it can find the sub-models of high posterior mass even under an extremely small prior probability,
as will be shown in \S\ref{sec:gmm-SLP-discovery}. 
Note that there are some subtleties to maintain the efficiency as the number of possible SLPs grow large, see Appendix~\ref{sec:app:disc} for further details.

\vspace{-2pt}
\subsection{Allocating Resources Between SLPs}
\label{sec:method-PPS-conquer-resource}

At each iteration we must choose an SLP from those discovered to perform local inference on. 
Though valid, it is not wise to split our computational resources evenly among all SLPs; it is more important to ensure we have accurate estimates for SLPs with large $Z_k$.
Essentially, we have a multi-armed bandit problem where we wish to develop a strategy of choosing SLPs that will lead to the lowest error in our \emph{overall} final estimate $\hat{\pi}$.
Though it might seem that this is a problem that DCC has introduced, it is actually an inherent underlying problem that must always be solved for models with stochastic support; DCC is simply making the problem explicit.
Namely, we do not know upfront which SLPs have significant mass and so any inference method must deal with the computational trade--off involved in figuring this out.
Conventional approaches do this in an implicit, and typically highly inefficient, manner.
For example, MCMC relies on the relatively frequency of individual transitions between SLPs to allocate resources, which will generally be extremely inefficient for finite budgets.

To address this, we introduce a resource allocation scheme based on an upper confidence bounding~(UCB) approach developed in~\citet{rainforth2018inference}.  
Specifically, we use the existing SLPs estimates to construct a utility function that conveys the relative merit of refining the estimates for each SLP, balancing the need for \emph{\textbf{exploitation}}, that is improving the estimates for SLPs currently believed to have large relative $Z_k$, 
and \emph{\textbf{exploration}}, that is improving our estimates for SLPs where our uncertainty in $Z_k$ is large.
At each iteration, we then update the estimate for the SLP which has the largest utility, defined as
\begin{align*}
\vspace{-5pt}
U_k := \frac{1}{S_k} \left(
\frac{(1-\delta)\hat{\tau}_k}{\max_k \{\hat{\tau}_k\}} 
+ \frac{\delta \hat{p}_k}{\max_k \{\hat{p}_k\}}  %
+  \frac{\beta \log \sum_{k} S_k}{\sqrt{S_k}}
\right)
\vspace{-5pt}
\end{align*}
where 
$S_k$ is the number of times we have previously performed local inference on $A_k$; $\hat{\tau}_k$ is the current estimate of the ``reward'' of $A_k$, incorporating both how much mass the SLP contains and how efficient our estimates are for it; $\hat p_k $ is a targeted exploration term that helps identify promising SLPs that we are yet to establish good estimates for; $0\le\delta\le1$ is a hyperparameter controlling the trade--off between these terms; and $\beta>0$ is the standard optimism boost hyper-parameter.
For more details see Appendix~\ref{sec:supp-resource-alloc}.

\section{Experiments}

\subsection{Gaussian Mixture Model (GMM)}
\label{sec:gmm}

We now further investigate the GMM example with an unknown number of clusters introduced in \S\ref{sec:inf-eng}.
Its program code written in Anglican is provided in Appendix~\ref{sec:supp-gmm}.
We compare the performance of DCC against five baselines: 
importance sampling (from prior)~(IS),
RMH~\cite{le2015rmh},
Particle Gibbs~(PG)~\cite{andrieu2010particle},
interacting Particle MCMC~(IPMCMC)~\cite{rainforth2016interacting},
and Black-box Variational Inference~(BBVI)~\cite{paige2016automatic},
taking the same computational budget of $10^6$ total samples for each. 

We first examine the convergence of the overall marginal likelihood estimate $\hat Z$.
Here IS is the only baseline which can be used directly, but we also consider
drawing importance samples centered around the RMH chain in a manner akin to PI-MAIS.
Figure~\ref{fig:GMM} [Left] shows that DCC outperforms both by many orders of magnitude. 
The sudden drop in the error for DCC is because the dominant sub-model with $K=5$ is typically discovered after taking around $10^4$ samples. 
Further investigation of that SLP allows DCC to improve the accuracy of the estimate.
DCC has visited $23$ to $27$ sub-models (out of \emph{infinitely} many) among all $15$ runs.

We next examine the posterior distribution of $K$ and report the estimates of $p(K=5\,|\,y_{1:N_y})$ in Figure~\ref{fig:GMM} [Right].
We see that all methods other than DCC struggle.
Here , the accuracy of the posterior of $K$ reflects the accuracy in estimating the relative masses of the different SLPs, i.e.~$Z_k$, explicitly or implicitly.
The dimension of this model varies between one and infinity and the posterior mass is concentrated in a small sub-region (${K=5}$) with small prior mass. 
It is therefore challenging for the baselines to either to learn each marginal likelihood simultaneously (eg.~in IS)
or to estimate the relative masses implicitly through transitions between configurations using an MCMC sampler.
By breaking down the model into sub-problems, DCC is able to overcome these challenges and provide superior posterior estimator for the overall model.

\vspace{-4pt}
\subsection{GMM with Misspecified Prior}
\label{sec:gmm-SLP-discovery}

To further test the capability of each method to discover SLPs---and to examine the MCMC-esque behavior for DCC in SLP space in particular---we adjust the GMM example above slightly so that $K$ now has a, high misspecified, prior of $\mathrm{Poisson}(90){+}1$, keeping everything else the same.
The dominant SLP is still ${K=5}$ (with around $0.9976$ posterior mass), but his now has an extremely low prior probability (around $10^{-14}$).
Consequently, finding this dominant SLP is only practically possible if the algorithm exhibits an effective hill climbing behavior in SLP space.
We only now compare DCC to RMH on the basis that: a) none of the baselines could deal with simpler case before, such that they will inevitably not be able to deal with this harder problem; and b) RMH is the only baseline where one might expect to see some hill climbing behavior in SLP space.

Figure~\ref{fig:gmm-poi90-visit} shows the trace plot for the SLP visit history (i.e.~sampled K at each iteration) of each method.
As we can see in the bottom plot,
DCC starts from the SLPs of $K$ around $90$, influenced by the prior, and gradually discovers smaller $K$s with higher posterior mass, also exploring large values of $K$ as well.
This implies a MCMC-esque hill-climbing behavior guided by our SLP discovery scheme.
Moreover, the trace plot also demonstrates the resource allocation within DCC where it gradually spends more computation for SLPs with lower $K$s while still maintaining a degree of exploration.
Both factors are essential for the resulting accurate posterior approximation shown in Figure~\ref{fig:gmm-poi90-p-k}.

By comparison, RMH gets stuck in its initialized SLP (Figure~\ref{fig:gmm-poi90-visit} top):
for the one run shown it only makes one successful transition to another SLP and never gets anywhere close to region of SLPs with significant mass.
Equivalent behavior was experience in all the other runs (not shown).
As a result, it does not produce a reasonable posterior estimate as shown Figure~\ref{fig:gmm-poi90-p-k}; in fact, it always returns an estimate of exactly $0$ as it never discovers this SLP.
It is worth noting that the local mixing of RMH between SLPs here is even worse than in the previous example.
This is because the larger $K$ at which the sampler is initialized induces a higher dimensional space on program draws, i.e.~$\mu_{1:K}$.
This is catastrophic for RMH because it is effectively importance sampling when transitioning between SLPs and thus suffers acutely from the curse of dimensionality.
DCC, meanwhile, gracefully deals with this because of its ability to remember SLPs that are proposed but not accepted and then subsequently perform effective localized inference that exploits hill--climbing effects in the space of the draws of that SLP.

\begin{figure}[!t]
	\centering
	\vspace{-5pt}
	\hspace{-11pt}
	\subfigure[SLP visit order]
	{\label{fig:gmm-poi90-visit}\includegraphics[width=0.6\linewidth, height=0.5\linewidth]{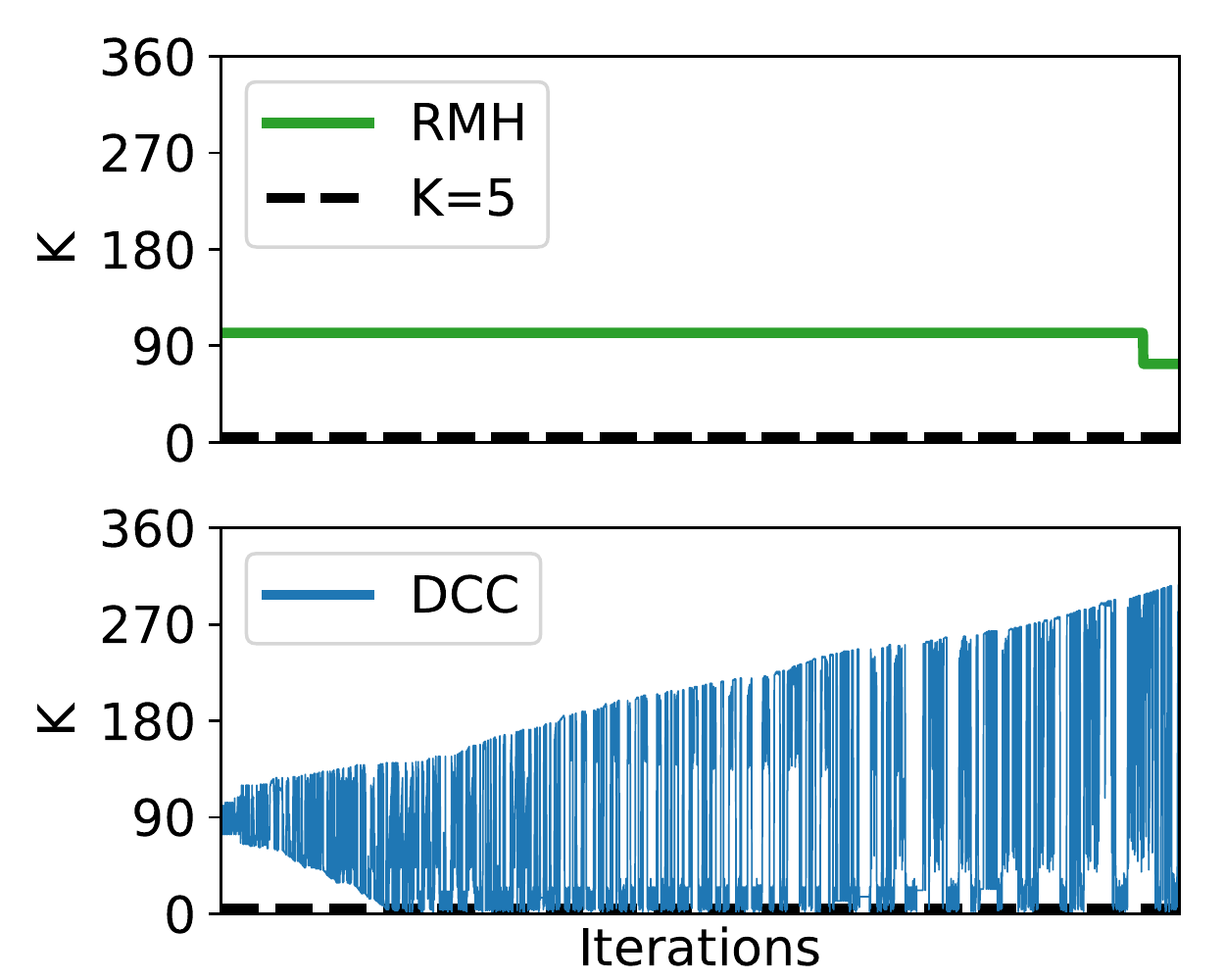}}%
	~~~~~~~~~~~
	\hspace{-20pt}
	\subfigure[$\hat{p}(K=5\,|\,y_{1:N_y})$]
	{\label{fig:gmm-poi90-p-k}\includegraphics[width=0.4\linewidth,height=0.51\linewidth]{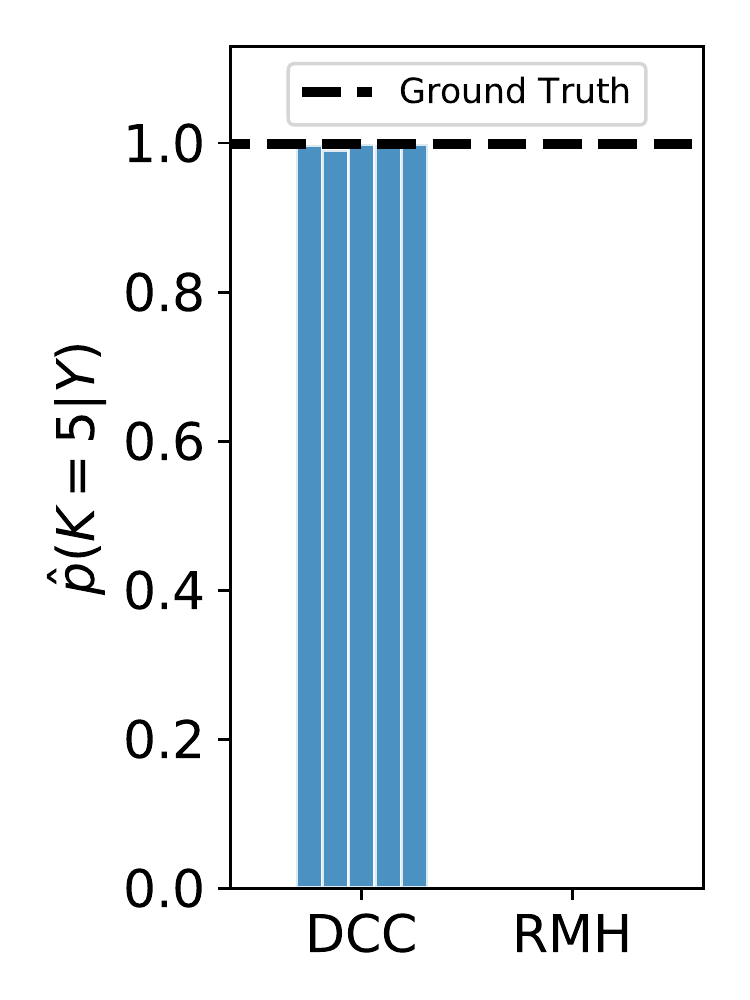}}%
	\vspace{-5pt}
	\caption{Comparison of DCC to RMH on GMM with misspecified prior. [Left] visit order of SLPs (i.e.~sampled $K$ at each iteration) for single run.  [Right] final posterior estimates for $5$ different runs.}
	\label{fig:gmm-poi90}
	\vspace{-15pt}
\end{figure}

\begin{figure*}[t!]
	\centering
	\subfigure[DCC~(ours)]
	{\centering\label{fig:PCFG-DCC}\frame{\includegraphics[width=0.32\linewidth, height=0.18\linewidth,trim={2.7cm 2.5cm 2cm 2cm},clip]{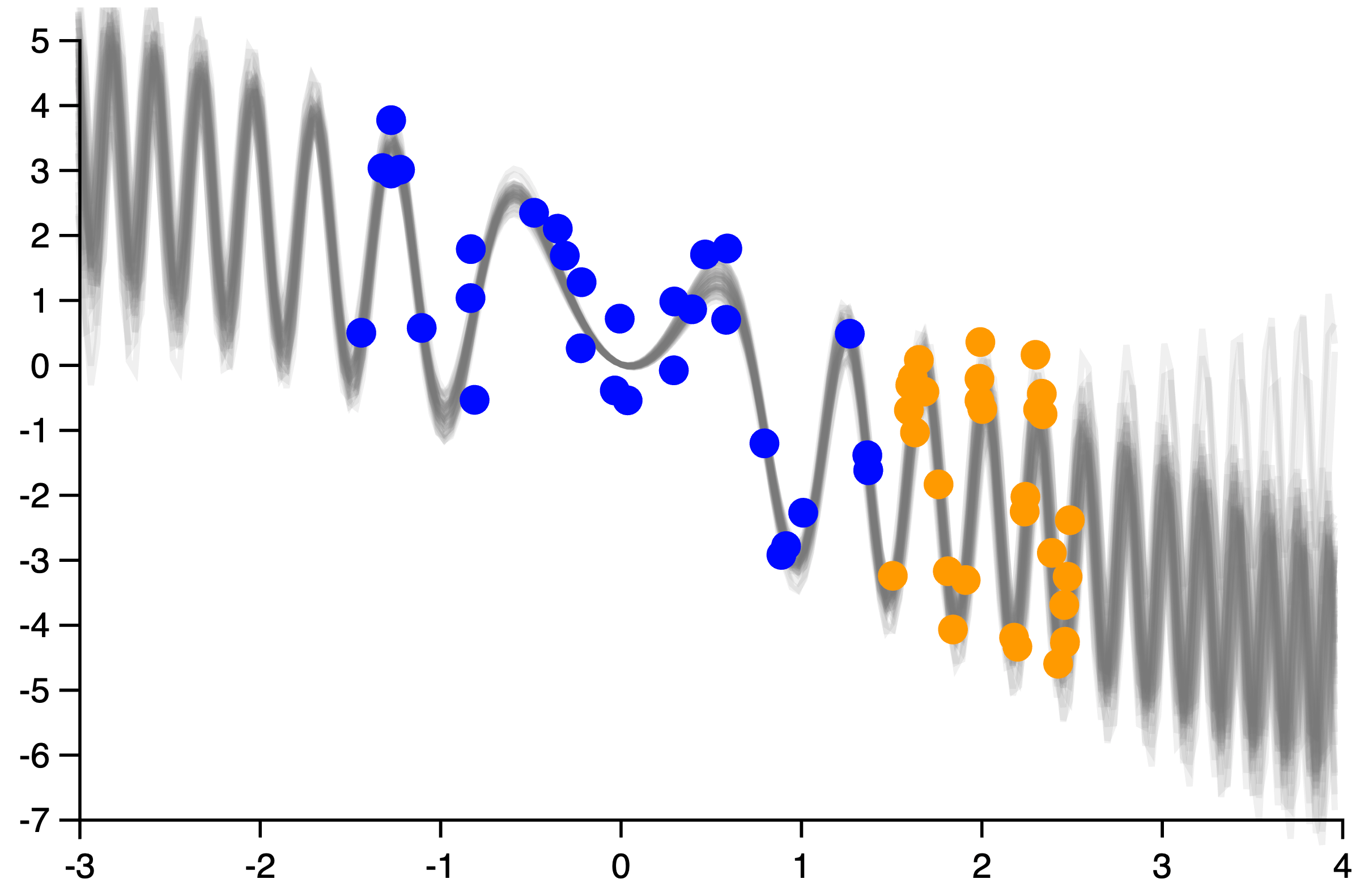}}}%
	\subfigure[IS]
	{~~~\,\centering\label{fig:PCFG-IS}\frame{\includegraphics[width=0.32\linewidth, height=0.18\linewidth,trim={2.7cm 2.5cm 2cm 2cm},clip]{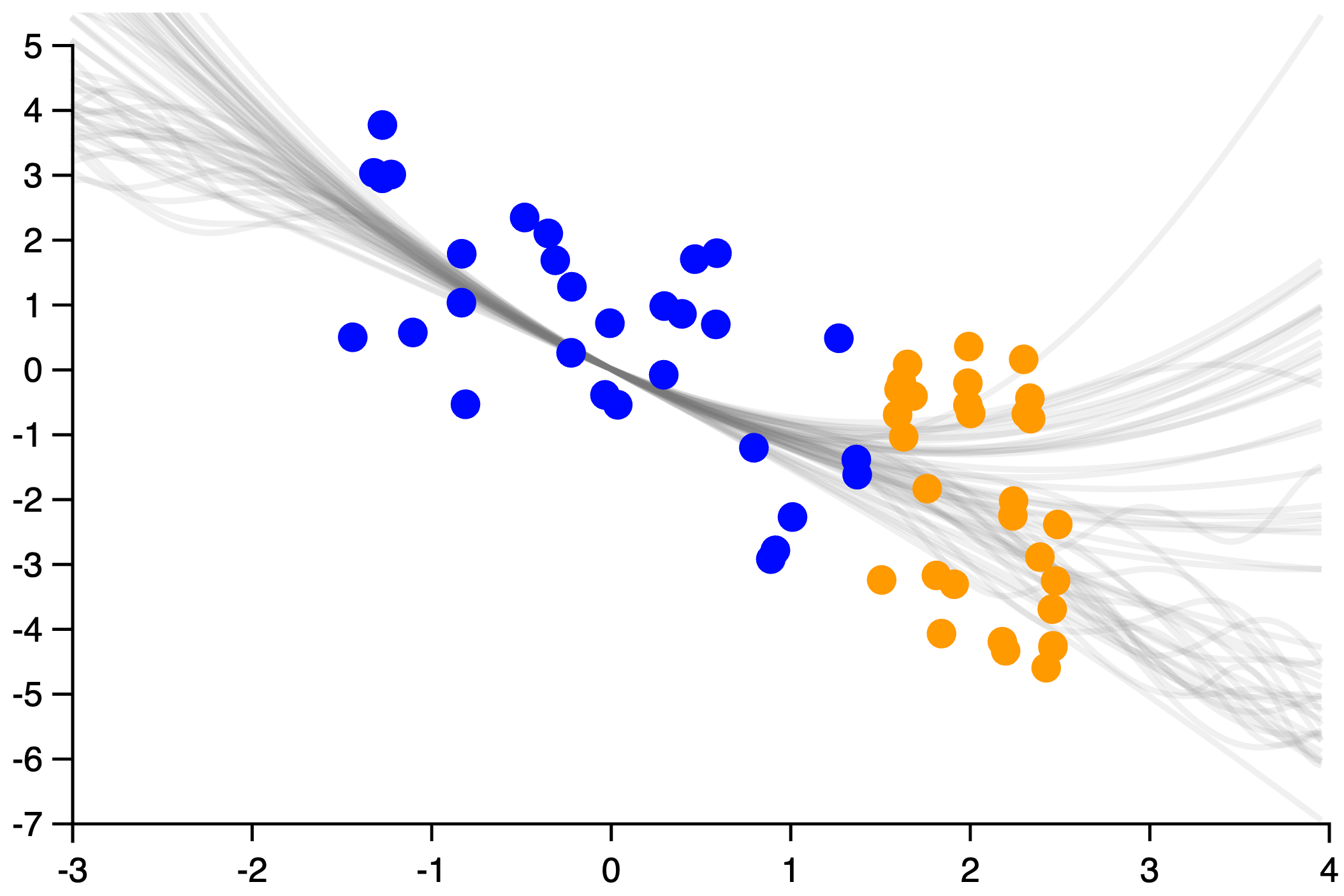}}}%
	\subfigure[RMH]
	{~~~\,\centering\label{fig:PCFG-RMH}\frame{\includegraphics[width=0.32\linewidth, height=0.18\linewidth,trim={2.7cm 2.5cm 2cm 2cm},clip]{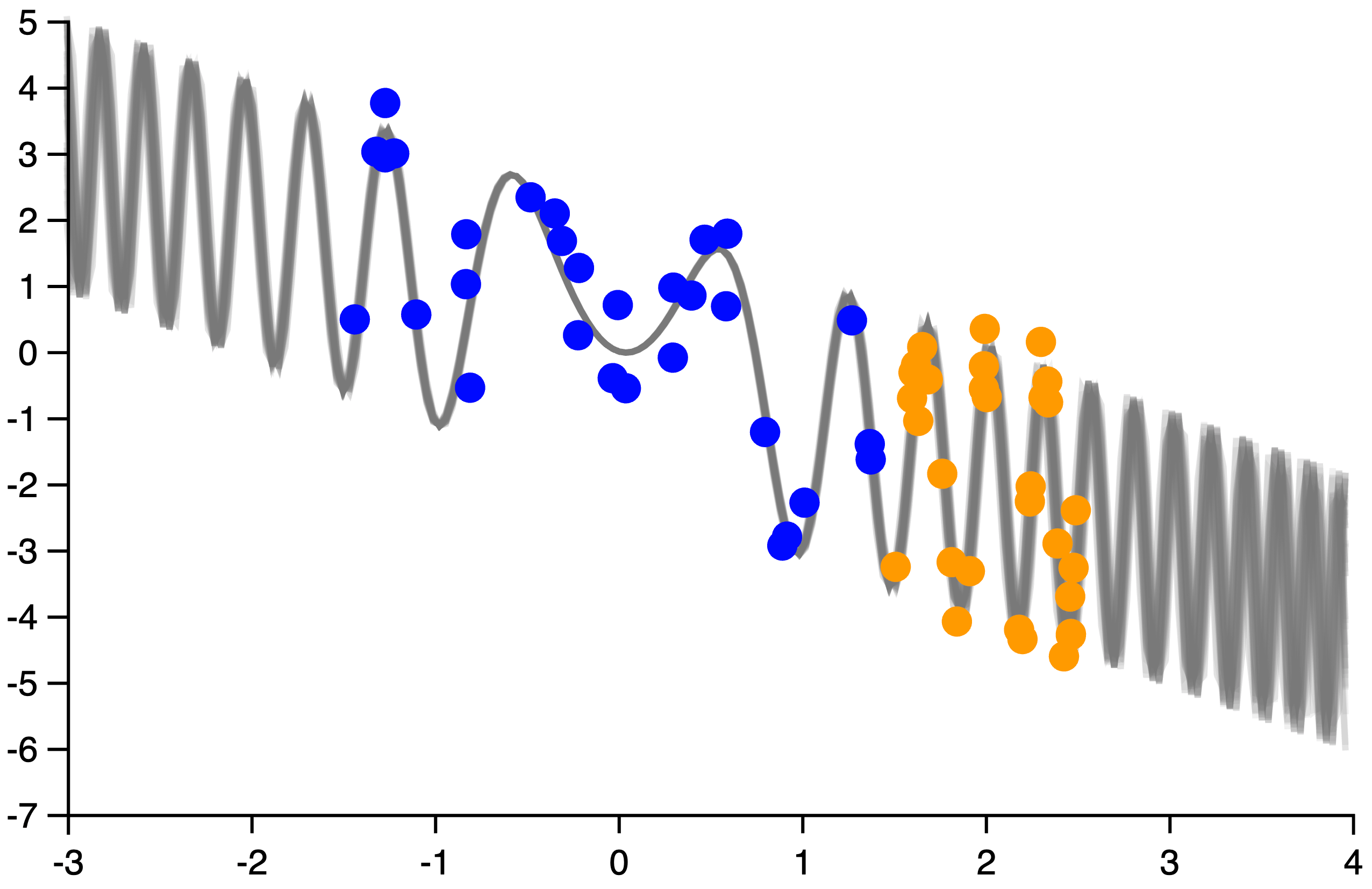}}}
	\vspace{-8pt}
	\caption{
		Posterior distribution $p(\Theta|D)$ estimated by DCC, IS, and RMH under the same computation.
		Blue points represent the observed data $D$ and orange ones the test data $D'$. 
		Grey lines are the posterior samples of the functions from the run with the highest LPPD among 15 independent runs of the three algorithms.
		\vspace{-10pt}
	}
	\label{fig:PCFG-qualitative}
\end{figure*}

\begin{table*}[!t]
	\centering
	\caption{Mean and one standard derivation of the LPPD over 15 independent runs.}
	\vspace{2pt}
		\small
		\begin{tabular}{|c|c|c|c|c|c|}
			\hline
			& \textbf{DCC~(ours)}                & IS                 & RMH           & PG& IPMCMC       \\ \hline
			LPPD & \textbf{-28.560 $\pm$ 0.41} & -73.180 $\pm$ 1.08 & -32.693 $\pm$ 8.51 & -200.824 $\pm$ 126.63 &-70.580 $\pm$ 4.71 \\ \hline
	\end{tabular}
	\label{tab:PCFG}
	\vspace{-5pt}
\end{table*}

\subsection{Function Induction}
\label{sec:pcfg}

Function induction is an important task for automated machine learning %
\cite{duvenaud2013structure, kusner2017grammar}. In PPSs, it is typically tackled  using a probabilistic context free grammar~(PCFG)~\cite{manning1999foundations}. 
Here we consider such a model where we specify the structure of a candidate function using a PCFG and a distribution over the function parameters, and estimate the posterior of both for given data.
Our PCFG consists of four production rules with fixed probabilities: $e \to \{x \,\mid\, x^2 \,\mid\, \sin(a*e) \,\mid\, a*e + b*e \}$, where $x$ and $x^2$ are terminal symbols,
$a$ and $b$ are unknown coefficient parameters, and
$e$ is a non-terminal symbol. %
The model also has prior distributions over each coefficient parameters. See Appendix~\ref{sec:supp-pcfg} for details.

To generate a function from this model, we must sample both a PCFG rollout and the corresponding parameters.
Let $\Theta$ be the collection of all the latent variables used in this generative process. 
That is, $\Theta$ consists of the sequence of the discrete variables recording the choices of the grammar rules
and all coefficients in the sampled structure.
Conditioned on the training data $D$, we want to infer the posterior distribution $p(\Theta|D)$, and calculate the posterior predictive distribution $p(D'|D)$ for test data $D' = \{x_n, y_n\}_{n=1}^{N}$.

In our experiment, we control the number of sub-models by requiring that the model use the PCFG in a restricted way: a sampled function structure should have depth at most $3$ and cannot use the plus rule consecutively. We generate a synthetic dataset of $30$ training data points %
from the function $f(x) = -x + 2\sin (5x^2)$
and compare the performance of DCC to our baselines on estimating the posterior distribution and the posterior predictive under the same computational budget of $10^6$ samples and $15$ independent runs.
BBVI is omitted from this experiment due to it failing to run at all.

Figure~\ref{fig:PCFG-qualitative} shows the posterior samples generated by DCC, IS, and RMH for one run, with the training data $D$ marked blue and the test data $D'$ in orange. The DCC samples capture the periodicity of the data and provides accurate extrapolation, while retaining an appropriate degree of uncertainty. This indicates good inference
results on both the structure of a function and the coefficients.
Though RMH does find some good functions, it becomes stuck in a particular mode and does not fully capture the uncertainty in the model, leading to poor predictive performance.

Table~\ref{tab:PCFG} shows the test log posterior predictive density (LPPD),  %
$\sum_{n=1}^{N} \log  \int_{\Theta} p(y_n | x_n, \Theta) p(\Theta | D) d\Theta$,
of all approaches.
DCC substantially outperforms all the baselines both in terms of predictive accuracy and stability.
IS, PG, and IPMCMC all produced very poor posterior approximations leading to very low LPPDs.
RMH had an LPPD that is closer to DCC, but which is still substantially inferior.

A further issue with RMH was its high variance of the LPPD. 
This is caused by this model being multi-modal and RMH struggling to move: it gets stuck in a single SLP and fails to capture the uncertainty.
Explicitly, $4$ sub-models (out of $26$) contain most of the probability mass.
Two of them are functions of the form used to generate the data, $f(x) = a_1x + a_2 \sin(a_3x^2)$, modulo symmetry of the $+$ operator.  
The other two have the form $f(x) = a_1 \sin (a_2 x) + a_3 \sin(a_4x^2) $, which can also match the training data well in the region ($-1.5, 1.5$) as $a_1\sin(a_2 x) \approx a_1 a_2 x$ for small values of $a_2 x$. 
Note that the local distributions are also multi-modal due to various symmetries, for example $a_1 \sin(a_2x^2)$ and $-a_1 \sin(-a_2x^2)$, meaning the local inference task is non-trivial even in low dimensions. 

\begin{figure}
	\hspace{-8pt}
	\begin{minipage}[c]{0.285\textwidth}
		\includegraphics[width=\textwidth]{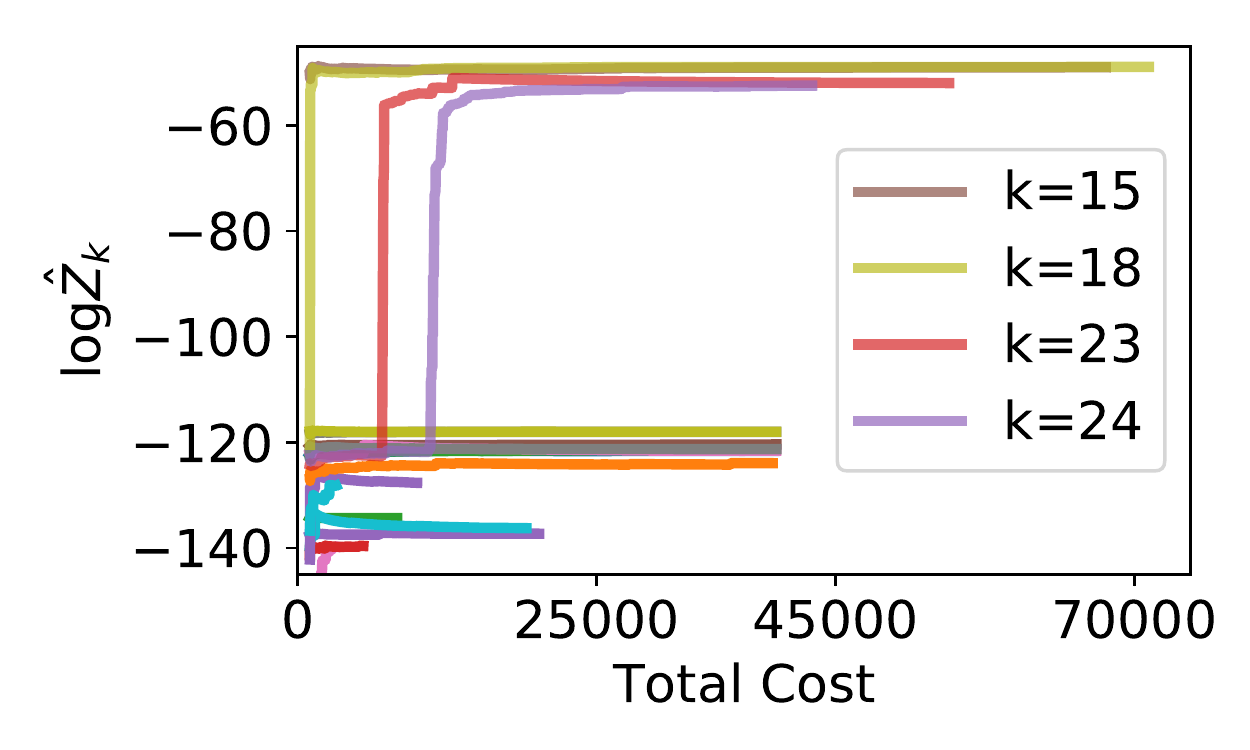}
	\end{minipage} ~~
	\begin{minipage}[c]{0.19\textwidth}
		\caption{Convergence of DCC's $\log Z_k$ estimate for each SLP and corresponding total amount of resources spent.\vspace{10pt}}
 	\label{fig:PCFG-logZ-Ks}
	\end{minipage}
\vspace{-20pt}
\end{figure}
To test the effectiveness of the resource allocation strategy, 
we further investigate the 
computational resources spent for each SLP
by looking at the convergence of the local marginal likelihood estimates $\hat{Z}_k$. %
In Figure~\ref{fig:PCFG-logZ-Ks},  
the sub-models $15$ and $18$ correspond to the form $f(x) = a_1x + a_2 \sin(a_3x^2)$  and its mirror, which contain the most posterior mass. 
The sub-models $23$ and $24$ correspond to $f(x) = a_1 \sin (a_2 x) + a_3 \sin(a_4x^2) $ and are the second largest modes. %
Figure~\ref{fig:PCFG-logZ-Ks} implies that DCC indeed spends 
more computational resource on sub-models with high probability mass (as signified by the higher final total cost), 
while also exploring the other sub-models occasionally. 

\vspace{3pt}
\section{Conclusions}
In this paper, we have proposed \emph{Divide, Conquer, and Combine (DCC)}, a new inference strategy
for probabilistic programs with stochastic support.
We have shown that, by breaking down the overall inference problem into a number of separate inferences of sub-programs with fixed support, the DCC framework can provide substantial performance improvements over existing approaches which directly target the full program.
To realize this potential, we have shown how to implement a particular instance of DCC as an automated engine in the PPS Anglican and shown that this outperforms existing baselines on three example problems.

\clearpage
\section*{Acknowledgements}
YZ is sponsored by China Scholarship Council (CSC).
HY was supported by the Engineering Research Center Program through the National Research Foundation of Korea (NRF) funded by the Korean Government MSIT (NRF-2018R1A5A1059921), and also by
Next-Generation Information Computing Development
Program through the National Research Foundation
of Korea (NRF) funded by the Ministry of Science,
ICT (2017M3C4A7068177). YWT’s and TR’s research
leading to these results has received funding from
the European Research Council under the European
Union’s Seventh Framework Programme (FP7/2007-
2013)/ ERC grant agreement no. 617071. TR was also
supported in part by Junior Research Fellowship from
Christ Church, University of Oxford and and in part
by EPSRC funding under grant EP/P026753/1.

\bibliography{refs}
\bibliographystyle{icml2020}

\clearpage
\onecolumn
\appendix

	\thispagestyle{empty} 
	\rule{\textwidth}{1pt}
	\vspace{-8pt}
	\begin{center}
		\textbf{ \Large Appendix for Divide, Conquer, and Combine: a New Inference Strategy \\ 
			for Probabilistic Programs with Stochastic Support}
	\end{center}\vspace{-8pt}
	\rule{\textwidth}{1pt}
	\icmltitlerunning{Appendix: Divide, Conquer, and Combine}

\section{Existing PPSs and Inference Engines for Probabilistic Programs with Stochastic Support}
\label{sec:supp-discussion-PPSs}

\begin{table}[!h]
	\centering
	\caption{List of popular universal PPSs and their supported inference algorithms for models with stochastic support.\footnotemark}
	\vspace{10pt}
	\label{tab:existing-PPSs}
	\begin{tabular*}{\textwidth}{c|ccccccc}
		\thickhline
		& Rejection    & IS           & SMC          & PMCMC 
		&  VI 
		& \begin{tabular}[c]{@{}c@{}}Customized \\proposal MCMC \end{tabular} 
		& \begin{tabular}[c]{@{}c@{}}Automated \\proposal MCMC\end{tabular} %
		\\ \thickhline
		\begin{tabular}[c]{@{}c@{}}Venture\\ \cite{mansinghka2014venture} \end{tabular}       
		& $\checkmark$ &              &    $\checkmark$     &     $\checkmark$                              &    $\checkmark$   &                                                             &      $\checkmark$         \\ \hline
		 \begin{tabular}[c]{@{}c@{}}WebPPL\\ \cite{goodman2014design} \end{tabular}  
		& $\checkmark$ &              & $\checkmark$ & $\checkmark$         &  &                        &   $\checkmark$           \\ \hline
		\begin{tabular}[c]{@{}c@{}}MonadBayes\\ \cite{scibior2018functional} \end{tabular}        
		&              &              & $\checkmark$         & $\checkmark$                                                           &              &                                                                       &              \\ \hline
		\begin{tabular}[c]{@{}c@{}}Anglican\\ \cite{wood2014new} \end{tabular}   
		&              & $\checkmark$ & $\checkmark$ & $\checkmark$                                                           & $\checkmark$ &                                                                       & $\checkmark$ \\ \hline
		 \begin{tabular}[c]{@{}c@{}}Turing.jI\\ \cite{ge2018turing} \end{tabular}   
		&              & $\checkmark$ & $\checkmark$ & $\checkmark$                                                           &              &         $\checkmark$     &   $\checkmark$           \\ \hline
       \begin{tabular}[c]{@{}c@{}}Pyro\\ \cite{bingham2018pyro} \end{tabular}   
		&              & $\checkmark$ & $\checkmark$ &                                                                        &       $\checkmark$       &        $\checkmark$      &  \\ \hline
		\begin{tabular}[c]{@{}c@{}}Gen\\ \cite{cusumano2019gen} \end{tabular}             
		&              & $\checkmark$ & $\checkmark$ &                          &      $\checkmark$        & $\checkmark$                                                          &      $\checkmark$      \\ \hline
		\begin{tabular}[c]{@{}c@{}}Hakaru\\ \cite{narayanan2016probabilistic} \end{tabular}        
		&              &   $\checkmark$     &              &                                                                        &              & $\checkmark$                                                          &              \\ \hline
		\begin{tabular}[c]{@{}c@{}}Stochaskell \\ \cite{roberts2019reversible} \end{tabular}        
		&              &              &              &                                                                        &              & $\checkmark$                                                          &              \\ \thickhline
	\end{tabular*}
\vspace{10pt}
\end{table}
\footnotetext{All information are taken from the published paper or the online manual.}

In Table~\ref{tab:existing-PPSs} above, we have listed which inference engines from the five categories in \S\ref{sec:inf-eng} are supported by each of the most popular universal PPSs.
The fundamental difficulty in performing inference on probabilistic models with stochastic support is that
the posterior mass of such models is usually concentrated in one or many separated sub-regions which are non-trivial to be fully discovered, especially in a high dimensional space.  
Moreover, even if a variable exists in different variable configurations, the posterior might shift substantially like the mean $\mu_1$ for the GMM shown in the Figure~\ref{fig:GMM-demo-2}, which complicates the design for a ``proper'' proposal distribution, let alone automate this procedure in PPS. 
We now have a more detailed look at each category with the related PPSs and uncover the reasons why these engines are not suitable for probabilistic programs with stochastic support. 

\paragraph{Importance/rejection sampling}  
Basic inference schemes, such as importance sampling~(IS) and rejection sampling~(RS), are commonly supported by many PPSs due to their generality and simplicity. See Table~\ref{tab:existing-PPSs}.
They can be directly applied to a model with stochastic support, but their performance deteriorates rapidly (typically exponentially) as the dimension of the model increases; they suffer acutely from the curse of dimensionality.
Furthermore, their validity relies on access to a valid proposal.  
This can often be easily ensured, e.g.~by sampling from the prior, but doing this while also ensuring the proposal is efficient can be very challenging; the prior is only a practically viable proposal for very simple problems.
Though schemes for developing more powerful proposals in amortized inference settings have been developed~\citep{le2016inference,ritchie2016deep}, these are not appropriate for conventional inference problems.

\paragraph{Particle based methods}
Particle based inference methods, such as Sequential Monte Carlo~(SMC)~\cite{doucet2001introduction}, can offer improvements for models with natural sequential structure~\cite{wood2014new,rainforth2016interacting}.
More explicitly, SMC improves IS when there exist interleaved observe statements in the model program.
However, it similarly rapidly succumbs to the curse of dimensionality in the more general case where this does not occur.
Such methods also cannot be used at all if the number of observe statements is not fixed, 
which can be common for stochastic support problems.

One might then tend to more sophisticated methods such as particle MCMC~(PMCMC) methods~\cite{andrieu2010particle}
but unfortunately these suffer from the same underlying issues.
The basic setups of PMCMC include the Particle Independent Metropolis Hasting~(PIMH) (in Anglican and Turing.jI) and Particle Gibbs~(PG) (in Anglican and Turing.jI~\cite{ge2018turing}), where one uses, respectively, \emph{independent} and \emph{conditional} SMC sweeps as the proposal within the MCMC sampler. 
These building-block sweeps suffer from exactly the same issue as conventional SMC, and thus offer no advantage over simple importance sampling without interleaved observe statements.

These advanced variants do though allow one to treat global parameters separately:
they update the global parameters using a Metropolis--within--Gibbs (MwG) step and then update the rest latent variables using a (conditional) SMC sweep with those global parameter values.
When combined with PIMH, this leads to Particle Marginal Metropolis Hasting~(PMMH) algorithm (available in WebPPL~\cite{goodman2014design}, Turing.jI and MonadBayes~\cite{scibior2018functional}). 
It already constitutes a valid step for PG.
Unfortunately, in both cases this MwG suffers from exactly the same issues as those already discussed for LMH.
As such, these approaches again offer little advantage over importance sampling without sequential problem structure.

\paragraph{Variational inference}
Following the discussion in \S~\ref{sec:inf-eng},
three universal PPSs under our survey that allow Variational Inference~(VI) in stochastic support settings are Pyro~\cite{bingham2018pyro}, Gen~\cite{cusumano2019gen} and Anglican~\cite{wood2014new} (note many others allow VI for statistic support, but cannot be used when the support varies).
Pyro supports Stochastic Variational Inference~(SVI)~\cite{hoffman2013stochastic, wingate2013automated, kucukelbir2017automatic} and allows an auto-generated guide function~(AutoGuide), i.e. the variational proposal, or a user-specified guide.
However, AutoGuide only works for some basic probabilistic programs, which we cannot directly apply for the GMM example in \S~\ref{sec:inf-eng}.
With the customized guide, one cannot deal with the exact same model because of the need to upper-bound the number of the variational parameters according to the tutorial\footnote{\url{https://pyro.ai/examples/dirichlet_process_mixture.html}}.

Both Gen and Anglican support the Black box Variational Inference~(BBVI)~\cite{ranganath2014black}.
We have tested the Anglican's BBVI with the GMM example in \S~\ref{sec:inf-eng}, but it does not provide very accurate estimates as can be seen in the Figure~\ref{fig:GMM-demo}.

One key challenge for implementing VI dealing with models with stochastic support is that one might still have never-before-seen variables after finite number of training steps.
Moreover, it is usually very challenging in stochastic support settings to ensure that the variational family is defined in a manner that ensures the KL is well defined.  
For example, if using $\text{KL}(q||p)$ (for proposal $q$ and target $p$), then the KL will be infinite if $q$ places support anywhere $p$ does not.  
Controlling this with static support is usually not especially problematic, but in stochastic support settings it can become far more challenging.

Overcoming these complications is beyond the scope of our paper but could be a potential interesting direction for future work.

\paragraph{MCMC with customized proposal}
To perform Markov chain Monte Carlo~(MCMC) methods~\cite{metropolis1949monte} on the models with stochastic support, one needs to construct the transitional kernel such that the sampler can switch between configurations. 
A number of PPSs such as Hakaru, Turing.jI, Pyro and Gen allow the user to customize the kernel for Metropolis Hastings (a.k.a programmable kernel) whereas Stohaskell explicitly supports reversible jump Markov chain Monte Carlo~(RJMCMC)~\cite{green1995reversible, green2003trans} methods. 
However, their application is fundamentally challenging as we have discussed in \S~\ref{sec:inf-eng} due to the difficulty in designing proposals which can transition efficiently as well as the posterior shift on the variable under different configuration.

\paragraph{MCMC with automated proposal}
One MCMC method that can be fully automated for PPSs is the Single-site Metropolis Hastings or the Lightweight Metropolis Hastings algorithm (LMH) of~\cite{wingate2011lightweight}
and its extensions~\cite{yang2014generating,TolpinMPW15,le2015rmh, RitchieSG16}, for which implementations are provided in a number of systems such as
Venture~\cite{mansinghka2014venture}, %
WebPPL %
and Anglican. %
In particular, Anglican supports LMH and its variants, Random-walk lightweight Metropolis Hastings (RMH)~\cite{le2015rmh}, which uses a mixture of prior and local proposal to update the selected entry variable $x_i$ along the trace $x_{1:n_x}$.

Many shortcomings of LMH and RMH have been discussed in \S\ref{sec:inf-eng} and we reiterate a few points here.
Though widely applicable, LMH relies on proposing from the prior whenever the configuration changes for the downstream variables.
This inevitably forms a highly inefficient proposal (akin to importance sampling from the prior), such that although samples from different configurations might be proposed frequently, these samples might not be ``good'' enough to be accepted. 
This usually causes LMH get stuck in one sub-mode and struggles to switch to other configurations. 
For example, we have observed this behavior in the GMM example in Figure~\ref{fig:GMM}. 
As a result, LMH typically performs very poorly for programs with stochastic support, particularly in high dimensions.

Note that LMH/RMH could have good mixing rates for many models with fixed support in general akin to standard Metropolis-within-Gibbs methods.
This is because when $x_i$ is updated, the rest of the random variables can still be re-used, which ensures a high acceptance rate. 
That is also why we can still use LMH/RMH as the local inference algorithm within the DCC framework for a fixed SLP to establish substantial empirical improvements.

\section{Detailed Algorithm Block for DCC in Anglican}
\label{sec:app:alg-block}

\begin{algorithm}[H]
	\caption{An Implementation of DCC in Anglican \label{alg:div-con-anglican}}
	\begin{algorithmic}[1]
		\floatname{algorithm}{Procedure}
		\renewcommand{\algorithmicrequire}{\textbf{Input:}}
		\renewcommand{\algorithmicensure}{\textbf{Output:}}	
		\renewcommand{\COMMENT}[1]{\hfill$\triangleright$ {#1}}
		
		\REQUIRE
		Program $\boldsymbol{\prog}$, number of iterations $T$, inference hyper-parameters $\Phi $ (e.g. number of initial iterations $T_0$, times proposed threshold $C_0$), 
		\ENSURE 
		Posterior approximation $\hat{\pi}$ and marginal likelihood estimate $\hat{Z}$
		
		\STATE Execute $\boldsymbol{\prog}$ forward multiple times (i.e.~ignore observes) to obtain an initial set of discovered SLPs $A^{total}$ 
		\FOR{$t=1,\dots,T$}
		\STATE Select the model(s) $k'$ in $A^{total}$ whose proposed times $C_{k'} \geq C_0$
		and add them into $A^{active}$
		\IF{exist any new models}
		\STATE Initialize the inference with $N$ parallel MCMC chains
		\STATE Perform $T_{init}$ optimization step for each chain by running a ``greedy'' RMH locally and only accepting samples with higher $\hat \gamma_k$ to boost burning-in; 
					store only the last MCMC samples as the initialization for each chain
		\STATE Draw $M$ importance samples using each previous MCMC samples as proposal to generate initial estimate for $\hat Z_k$
		\ENDIF
		\item[\qquad \; \textbf{Step 1: select a model $\mathbf{A_{k^*}}$}]
		\STATE 
		Choose a sub-model with index $k^*$ with the maximum utility value as per Equation~\ref{eq:ucb}
		\item[\qquad \; \textbf{Step 2: perform local inference on $\mathbf{A_{k^*}}$}]
		\STATE Perform one or more RMH step locally for all $N$ chains of model $K^*$ to update $\hat{\pi}_{k^*}$
		\STATE Draw $M$ importance samples using each previous MCMC samples as proposal to update $\hat{Z}_{k^*}$
		
		\item[\qquad \; \textbf{Step 3: explore new models} (optional)]
		\STATE Perform one RMH step using a global proposal for all $N$ chains to discover more SLPs $A_{k''}$
		\STATE Add $ A_{k''}$ to $A^{total}$ if $ A_{k''} \notin A^{total}$;  increment the $C_{k''}$
		\ENDFOR
		
		\STATE Combine local approximations into overall posterior estimate $\hat{\pi}$ and overall marginal likelihood estimate $\hat{Z}$ as per~\eqref{eq:comb_est}
	\end{algorithmic}
\end{algorithm}

\section{Details for Proof of Theoretical Correctness}
\label{sec:app:theory}

\setlength{\abovedisplayskip}{2pt}
\setlength{\belowdisplayskip}{3pt}
\setlength{\abovedisplayshortskip}{2pt}
\setlength{\belowdisplayshortskip}{3pt}

In this section, we provide a more formal demonstration of the consistency of the DCC algorithm.
We start by explaining the required assumptions, before going on to the main theoretical result.

More formally, our first assumption, which may initially seem highly restrictive but turns out to be innocuous, is that we only split our program into a finite number of sub-programs:
\begin{assumption}
	\label{assumption1}
	The total number of sub-programs $K$ is finite.
\end{assumption}
We note that this assumption is not itself used as part of the consistency proof, but is a precursor to Assumptions~\ref{assumption4} and~\ref{assumption5} being satisfiable.
We can always ensure the assumption is satisfied even if the number of SLPs is infinite; we just need to be careful about exactly how we specify a sub-program.
Namely, we can introduce a single sub-program that combines all the SLPs whose path is longer than $n_{\text{thresh}}$, i.e.~those which invoke $n_x > n_{\text{thresh}}$ sample statements, and then ensure that the local inference run for this specific sub-program is suitable for problems with stochastic support (e.g.~we could ensure we always use importance sampling from the prior for this particular sub-program).
If $n_{\text{thresh}}$ is then set to an extremely large value such that we can safely assume that the combined marginal probability of all these SLPs is negligible, this can now be done without making any notable adjustments to the practical behavior of the algorithm. 
In fact, we do not even envisage this being necessary for actual implementations of DCC: it is simply a practically inconsequential but theoretically useful adjustment of the DCC algorithm to simplify its proof of correctness.
Moreover, the fact that we will only ever have finite memory to store the SLPs means that practical implementations will generally have this property implicitly anyway.

For better notational consistency with the rest of the paper, we will use the slightly inexact convention of referring to each of these sub-programs as an SLP from now on, such that the $K^{\text{th}}$ ``SLP'' may actually correspond to a collection of SLPs whose path length is above the threshold if the true number of SLPs is infinite.

Our second and third assumptions simply state that our local estimators are consistent given sufficient computational budget:
\begin{assumption}
	\label{assumption2}
	For every SLP $k \in \{1,\dots,K\}$, we have a local density estimate $\hat{\pi}_k$ (taking the form on an empirical measure) which converges weakly to the corresponding conditional distribution of that SLP $\pi_k$ in limit of large allocated budget $S_k$, where $\pi_k(x) \propto \gamma(x) \iden[x\in \mathcal{X}_k]$, $\gamma(x)$ is the unnormalized distribution of the program, and $\mathcal{X}_k$ is the support corresponding to the SLP $k$.
\end{assumption}
\begin{assumption}
	\label{assumption3}
	For every SLP, we have a local marginal probability estimate $\hat{Z}_k$ which converges in probability to the corresponding to true marginal probability $Z_k = \int \gamma(x) \iden[x\in \mathcal{X}_k] dx$ in limit of large allocated budget $S_k$.
	We further assume that if $Z_k=0$, then $\hat{Z}_k$ also equals $0$ with probability $1$ (i.e.~we never predict non-zero marginal probability for SLPs that contain no mass).
\end{assumption}
The final part of this latter assumption, though slightly unusual, will be satisfied by all conventional estimators: it effectively states that we do not assign finite mass in our estimate to any SLP for which we are unable to find any valid traces with non-zero probability.
	
Our next assumption is that the SLP extraction strategy will uncover all $K$ SLPs in finite time.
\begin{assumption}
	\label{assumption4}
	Let $T_{\text{found}}$ denote the number of iterations the DCC approach takes to uncover all SLPs with $Z_k>0$.
	We assume that $T_{\text{found}}$ is almost surely finite, i.e.~$P(T_{\text{found}}<\infty) = 1$.
\end{assumption}
A sufficient, but not necessary, condition for this assumption to hold is to use a method for proposing SLPs that has a non-zero probability of proposing a new trace from the prior.
This condition is satisfied by LMH style proposals like the RMH proposal we adopt in practice.
We note that as with Assumption~\ref{assumption1}, this assumption is not itself used as part of the consistency proof, but is a precursor to Assumption~\ref{assumption5} (below) being satisfiable.

Our final assumption is that our resource allocation strategy asymptotically allocates a finite proportion of each of its resources to each SLP with non-zero marginal probability:
\begin{assumption}
	\label{assumption5}
	Let $T$ denote the number of DCC iterations and $S_k(T)$ the number of times that we have chosen to allocate resources to an SLP $k$ after $T$ iterations.
	We assume that there exists some $\epsilon>0$ such that
	\begin{align}
                \forall k \in \{1,\dots,K\}.\;
                Z_k>0 \implies \frac{S_k(T)}{T} > \epsilon.
	\end{align}
\end{assumption}

Given these assumptions, we are now ready to demonstrate the consistency of the DCC algorithm as follows:
\constistency*
\begin{proof}
By Assumption~\ref{assumption5}, we have that for all $k$ with $Z_k > 0$, $S_k\to\infty$ as $T\to\infty$. 
Using this along with Assumptions~\ref{assumption2} and~\ref{assumption3} gives us that, in the limit $T\to\infty$,
\begin{align}
\label{eq:zk_conv}
\hat{Z}_k &\pto Z_k \quad \forall k\in\{1,\dots,K\} \\
\label{eq:pik_conv}
\hat{\pi}_k (\cdot) &\dto \pi_k(\cdot) \quad \forall k\in\{1,\dots,K\} \text{ with } Z_k > 0
\end{align}
so that all our local estimates converge.

The result now follows through a combination of Equation~\ref{eq:comb_est}, linearity, and Slutsky's theorem.
Namely, let us consider an arbitrary bounded continuous function $f: \mathcal{X} \rightarrow \mathbb{R}$, for which we have
\begin{align*}
\int f(x)\hat \pi(dx) 
&= \frac{\int f(x) \sum_{k=1}^K \hat{Z}_k \hat \pi_k(dx) }{\sum_{k=1}^K \hat{Z}_k} \\
&= \frac{\sum_{k=1}^K \int f(x)  \hat{Z}_k \hat \pi_k(dx)}{\sum_{k=1}^K \hat{Z}_k}.
\end{align*}
Given Equations~\eqref{eq:zk_conv} and~\eqref{eq:pik_conv}, using Slutsky's theorem, we can conclude that as $T\to\infty$, the above integral converges to
\begin{align*}
\frac{\sum_{k=1}^K \int f(x) Z_k \pi_k(dx) }{\sum_{k=1}^K Z_k} &= \frac{ \int f(x) \sum_{k=1}^K Z_k \pi_k(dx)}{Z} \\
&= \frac{ \int f(x)\gamma(dx) }{Z}
\\
&= \mathbb{E}_{\pi(x)}[f(x)].
\end{align*}
We thus see that the estimate for the expectation of $f(x)$, which is  calculated using our empirical measure $\hat{\pi}\left(\cdot\right)$, converges to its true expectation under $\pi(\cdot)$.
As this holds for an arbitrary integrable $f(x)$, this ensures, by definition, that $\hat{\pi}\left(\cdot\right)$ converges in distribution to $\pi(\cdot)$, thereby giving the desired result.
\end{proof}

We finish by noting that our choices for the particular DCC implementation in Anglican straightforwardly ensure that these assumptions are satisfied (provided we take the aforementioned care around Assumption~\ref{assumption1}).
Namely:
\begin{itemize}
	\item Using RMH for the local inferences will provide $\hat{\pi}_k$ that satisfies Assumption~\ref{assumption2}.
	\item Using PI-MAIS will provide $\hat{Z}_k$ that satisfies Assumption~\ref{assumption3} provided we construct this with a valid proposal.
	\item The method for SLP extraction has a non-zero probability of discovering any of the SLPs with $Z_k>0$ at each iteration because it has a non-zero probability of proposing a new trace from prior, which then itself has a non-zero probability of proposing each possible SLP.
	\item The resource allocation strategy will eventually choose each of its possible actions with non-zero rewards (which in our case are all SLPs with $Z_k>0$) infinitely often, as was proven in~\cite{rainforth2018inference}.
\end{itemize}

\section{Additional Details on Local Estimators}
\label{sec:app:local-inf}

Recall that the goal for the local inference is to estimate the local target density $\pi_k(x)$ 
(where we only have access to $\gamma_{k}(x)$), 
and the local marginal likelihood $Z_k$ for a given SLP $A_k$.
Each SLP has a fixed support, i.e. a fixed configuration of the random variables, now where many of the complicated factors from varying support no longer apply. 

To estimate the local target density $\pi_k(x)$ for a given SLP $A_k$, DCC establishes a multiple-chain MCMC sampler in order to ensure a good performance in the setting with high dimensional and potentially multi-modal local densities.
Explicitly, we perform one RMH step in each chain for $N$ independent chains in total at each iteration.
Suppose the total iteration to run local inference in $A_k$ is $T_k$. 
With all the MCMC samples $(\hat x^{(k)}_{1:N, 1:T_k})$ within $A_k$,
we then have the estimator
\begin{align}
\label{eq:hat-pi-k}
\hat \pi_k(x) := \frac{1}{NT_k}\sum_{n=1}^{N}\sum_{t=1}^{T_k} \delta_{\hat x^{(k)}_{n, t}}(\cdot).
\end{align} 

As MCMC samplers do not directly provide an estimate for $Z_k$, we must introduce a further estimator that uses these samples to estimate it.
For this, we use PI-MAIS~\cite{martino2017layered}.
Though ostensibly an adaptive importance sampling algorithm, PI-MAIS~\cite{martino2017layered} is based around using the set of $N$ proposals each centered on the outputs of an MCMC chain. 
It can be used to generates marginal likelihood estimates from a set of MCMC chains, as we require.  

More precisely, given the series of previous generated MCMC samples, $\hat{x}^{(k)}_{1:N,1:T_k}$,  PI-MAIS introduces a mixture proposal distribution for each iteration of the chains 
by using the combination of separate proposals (e.g. a Gaussian) centered on each of those chains:
\begin{align}
q^{(k)}_{t}(\cdot | \hat{x}^{(k)}_{1:N,t}) := 
\frac{1}{N} \sum_{n=1}^{N} q^{(k)}_{n,t}(\cdot | \hat{x}^{(k)}_{n,t})
        \quad\text{ for } t \in \{1,2,\ldots,T_k\}.
\end{align}
This can then be used to produce an importance sampling estimate
for the target, with Rao-Blackwellization typically applied across the mixture components, 
such that $M$ samples, $(\tilde x^{(k)}_{n, t, m})^M_{m=1}$, are drawn separately from each $q^{(k)}_{n,t}$ with weights
\begin{align}
\tilde w^{(k)}_{n, t, m} := \frac{\gamma_k(\tilde x^{(k)}_{n, t, m})}{q^{(k)}_{t}( \tilde{x}^{(k)}_{n,t,m} |  \hat{x}^{(k)}_{n,t} )}
\,\text{ with }\, \tilde x^{(k)}_{n, t, m} \sim q^{(k)}_{n,t}(\cdot | \hat{x}^{(k)}_{n,t}),
        \qquad \text{for }\, m \in \{1,\ldots,M\} \,\text{ and }\, n \in \{1,\ldots,N\}. 
\end{align}
We then have the marginal likelihood estimate $\hat{Z}_k$ as
\begin{align}
\label{eq:hat-Z-k}
\hat Z_k := \frac{1}{NT_kM} \sum_{n=1}^{N}\sum_{t=1}^{T_k}\sum_{m=1}^{M} \tilde{w}^{(k)}_{n, t, m}.
\end{align}

An important difference for obtaining $\hat Z_k$ using an adaptive IS scheme with multiple MCMC chain as the proposal, compared to vanilla importance sampling (from the prior), 
is that MCMC chains form a much more efficient proposal than the prior as they gradually converge to the local target distribution $\pi_k(x)$.
These chains are running locally, i.e. restricted to the SLP $A_k$, which means that the issues of the LMH transitioning between SLPs as discussed in \S\ref{sec:inf-eng} no longer apply and local LMH could maintain a much higher mixing rate where it becomes the standard MwG sampler on a fixed set of variables. 
Furthermore, the benefit of having multiple chains is that they will approximate a multi-modal local density better. 
With $N$ chains, we no longer require one chain to discover all the modes but instead only need each mode being discovered by at least one chain. 
As a result, Equation~\ref{eq:hat-Z-k} provides a much more accurate estimator than basic methods.

An interesting point of note is that one can also use the importance samples generated by the PI-MAIS for the estimate $\hat{\pi}_k(x)$,
where $\hat \pi_k(x)$ from Equation~\ref{eq:hat-pi-k} will be
\begin{align}
\hat \pi_k(x) := \sum_{n=1}^{N}\sum_{t=1}^{T_k}\sum_{m=1}^{M} \bar w^{(k)}_{n, t, m}\delta_{\tilde x^{(k)}_{n, t, m}}(\cdot), 
\quad \text{where} \; 
\bar w^{(k)}_{n, t, m} := {\tilde w^{(k)}_{n, t, m}} \bigg/ {\sum_{n=1}^{N}\sum_{t=1}^{T_k}\sum_{m=1}^{M} \tilde w^{(k)}_{n, t, m}}.
\end{align}
The relative merits of these approaches depend on the exact problem. 
For problems where the PI-MAIS forms an efficient adaptive importance sampler, the estimate it produces will be preferable.
However, in some cases, particularly high-dimensional problems, this sampler may struggle, so that it is more effective to take the original MCMC samples.
Though it might seem that we are doomed to fail anyway in such situations, as the struggling of the PI-MAIS estimator is likely to indicate our $Z_k$ estimates are poor, this is certainly not always the case.
In particular, for many problems, one SLP will dominate, i.e. $Z_{k^*} \gg Z_{k\neq k^*}$ for some $k^*$.
In that case, we do not necessarily need accurate estimates of the $Z_k$'s to achieve an overall good approximation of the posterior. 
We just need to identify the dominant $Z_k$.

\section{Additional Details on SLP Extraction}
\label{sec:app:disc}

To better understand our SLP extraction procedure, one can imagine that we maintain two stacks of information of SLPs: one \texttt{total} stack and one \texttt{active} stack ($A^{total}$ and $A^{active}$ respectively in Algorithm~\ref{alg:div-con-anglican}).
The \texttt{total} stack records all the information of all discovered SLPs and the \texttt{active} stack keeps the SLPs that are believed to be promising so far.

Let's now have a detailed look at Algorithm~\ref{alg:div-con-anglican} together.
To prevent the rate of models being generated from outstripping our ability to perform inference on current models, we probably only want to perform inference on a subset of all possible sub-models given finite computational budget, which is $A^{active}$.
However, to determine which SLP might be good, i.e. have high posterior mass, is somewhat part of the job of the inference.
Therefore, in DCC, we propose that if a model in $\{A^{total}\setminus A^{active} \}$ is ``close'' enough to the promising models discovered so far as in $A^{active}$, it would be regarded as being \emph{potentially good} and added to $A^{active}$. 
To quantify the closeness,
we count how many times a discovered SLP not in $A^{active}$
gets proposed by a model in $A^{active}$ during the global exploration step (at line $13$, Algorithm~\ref{alg:div-con-anglican}).
We will add a newly discovered SLP into $A^{active}$ for the resource allocation only when its count reaches some threshold $C_0$ (line $3$).

One might worry the number of models in $A^{active}$ might still go beyond the capacity of the inference engine. 
We have considered the following design choices to avoid this situation in the DCC in Anglican.
The first one is to increase $C_0$ accordingly as the iteration grows. 
Intuitively, an SLP needs to be proposed more often to demonstrate that it might be a good one
when more computational resources are provided.
Another design choice is to control the total number of sub-models in $A^{active}$.
For instance, before one can add a new model in  $A^{active}$, one needs to take an SLP out of $A^{active}$, e.g. the one with the least $\hat \gamma_k$, if the upper bound of the total ``active'' number has reached. 
Our DCC also randomly chooses one ``non-active'' SLPs to perform local inference to ensure the overall correctness.

These design choices, though, are not always necessary if the number of possible sub-models does not explode naturally. 
For example, in the GMM example, this number is controlled by the value of a Poisson random variable which diminishes quickly, in which case we do not need to further bound the number of active sub-models.
But when it comes to the GMM with the misspecified prior where the possible number of active models can easily grow quickly, these design choices become essential. 
They prevent too much computation resources from being waste on keeping discovering new sub-models rather than being used to perform inference in the discovered ones. 

From the practical perspective, one might also want to ``split'' on discrete variables as their values are likely to affect the downstream program path. 
This means that for specific discrete variable(s), not only their addresses but also their values are included in defining a program path. 
It equivalently transforms sampling a discrete variable to observing the variable being a fixed value. 
The benefit of doing so is when performing inference locally, we no longer need to propose changes for that discrete variable but instead evaluate its conditional probability.
Therefore, we can ``avoid'' proposing samples out of current SLP too often to waste the computation.
The posterior of that discrete variable can be obtained from the local marginal likelihood estimates.
In our implementation of DCC in Anglican, we require the user to specify which discrete variables that they want to ``split'' on.
Automatically distinguishing which discrete variable will or will not affect program paths is beyond the scope of this paper, and we shall leave it for future work. 

\section{Additional Details on Resource Allocation}
\label{sec:supp-resource-alloc}

Once a new candidate of SLP has been selected to be added into $A_{active}$,
DCC firstly performs $T_{init}$ optimization steps to boost initialization of inference. 
Informally, we want to burn in the MCMC chains quickly 
such that the initialization of each chain would be close to the local mode of the target distribution.
By doing so, DCC applies a ``greedy'' RMH where it only accepts the MCMC samples with the larger $\hat \gamma_k$, which enforces the hill-climbing behavior in MCMC to discover modes.
Note that only the MCMC samples of the last step in the optimization will be stored as the initialization of each chain and therefore this optimization strategy will not affect the correctness of the local inference. 

As introduced in \S~\ref{sec:method-PPS-conquer-resource}, the resource allocation scheme is based on an Upper Confidence Bound~(UCB) scheme~\cite{carpentier2015adaptive} developed by~\citet{rainforth2018inference}.
We recall the utility function for each SLP being
\begin{align}
\label{eq:ucb}
U_k := \frac{1}{S_k} \left(
\frac{(1-\delta)\hat{\tau}_k}{\max_k \{\hat{\tau}_k\}} 
+ \frac{\delta \hat{p}_k}{\max_k \{\hat{p}_k\}} 
+  \frac{\beta \log \sum_{k} S_k}{\sqrt{S_k}}
\right)
\end{align}
where 
$S_k$ is the number of times that $A_k$ has been chosen to perform local inference so far, 
$\hat \tau_k$ is the ``exploitation target'' of $A_k$,
$\hat p_k $ is a target exploration term, 
and $\delta$ and $\beta$ are hyper-parameters.

As proved by~\citet[\S 5.1]{rainforth2018inference}, 
the optimal asymptotic allocation strategy is to choose each $A_k$ in proportion to $\hat \tau_k = \sqrt{ Z_k^2 + (1+\kappa)\sigma_k^2 }$
where $\kappa$ is a smoothness hyper-parameter,
$Z_k$ is the local marginal likelihood, and $\sigma_k^2$ is the variance of the weights of the individual samples used to generate $Z_k$.
Intuitively, this allocates resources not only to the SLPs with high marginal probability mass, but also to the ones having high variance on our estimate of it.
We normalize each $\hat \tau_k$ by the maximum of $\hat \tau_{1:K}$ as the reward function in UCB is usually in $[0,1]$.

The target exploration term $\hat p_k $ is a subjective tail-probability estimate on how much the local inference \textit{could improve} in estimating the local marginal likelihood if given more computations.
This is motivated by the fact that estimating $Z_k$ accurately is difficult, especially at the early stage of inference. 
One might miss substantial modes if only relying on optimism boost to undertake exploration.
As per~\cite{rainforth2018inference}, we realize this insight by extracting additional information from the log weights. 
Namely, we define $\hat p_k := P(\hat w_k(T_a) > w_{\mathit{th}} )\approx 1-\Psi_k(\log w_{\mathit{th}})^{T_a}$, 
which means the probability of obtaining at least one sample with weight $w$ that exceeds some threshold weight $w_\mathit{th}$ if provided with $T_a$ ``look-ahead'' samples. 
Here $\Psi_k(\cdot) $ is a cumulative density estimator of the log local weights (eg. the cumulative density function for the normal distribution), $T_a$ is a hyperparameter, and $w_{\mathit{th}}$ can be set to the maximum weight so far among all SLPs.
If $\hat p_k$ is high, it implies that there is a high chance that one can produce higher estimates of $Z_k$ given more budget.

\section{Details on Experiments}
\label{sec:supp-experiments}
	

\begin{figure}[!t]
	\centering
	\begin{minipage}{\textwidth}
	\begin{lstlisting}
(defdist lik-dist
    [mus std-scalar]
    []   ; auxiliary bindings
    (sample* [this] nil)  ;; not used
    (observe* [this y]  ;; customize likelihood
        (reduce log-sum-exp
            (map #(- (observe* (normal %1 std-scalar) y) (log (count mus))) 
                  mus))))

(with-primitive-procedures [lik-dist]
  (defquery gmm-open [data]
     (let [poi-rate 9
           ;; sample the number of total clusters
           K (+ 1 (sample (poisson poi-rate)))
           lo 0.
           up 20.
           ;; sample the mean for each k-th cluster
           mus (loop [k 0
                      mus []]
                 (if (= k K)
                    mus  ;; return mus
                    (let [mu-k (sample (uniform-continuous 
	                    		(+ lo (* (/ k K) (- up lo)))
	                    		(+ lo (* (/ (+ k 1) K) (- up lo)))))]
	               (recur (inc k) (conj mus mu-k)))))
            obs-std 0.1]
	 ;; evaluate the log likelihood
	 (map (fn [y] (observe (lik-dist mus obs-std) y)) data)
	 ;; output
	 (cons K mus))))
			\end{lstlisting}

	\end{minipage}
	\vspace{-12pt}
	\caption{Code example of GMM in Anglican}
	\label{fig:GMM-anglican-code}
	\vspace{-5pt}
\end{figure}

\subsection{Gaussian Mixture Model}
\label{sec:supp-gmm}
The Gaussian Mixture Model defined in \S\ref{sec:inf-eng} can be written in Anglican as in Figure~\ref{fig:GMM-anglican-code}. 
The kernel density estimation of the data is shown Figure~\ref{fig:GMM-demo} (black line) with the raw data file provided in the code folder.

\subsection{Function Induction}
\label{sec:supp-pcfg}
The Anglican program for the function induction model in in \S\ref{sec:pcfg} is shown in Figure~\ref{fig:PCFG-anglican-code}.
The prior distribution for applying each rule $R = \{e \to x \,\mid\, x^2 \,\mid\, \sin(a*e) \,\mid\, a*e + b*e \}$ is $P_R = [0.3, 0.3, 0.2, 0.2]$. 
To control the exploding of the number of sub-models, we set the maximum depth of the function structure being three and prohibit consecutive plus.  
Both our training data (blue points) and test data (orange points) displayed in Figure~\ref{fig:PCFG-qualitative} are generated from $f(x) = -x + 2\sin (5x^2)$ with observation noise being $0.5$ and the raw data files are provided in the code folder.


\begin{figure}[!t]
	\centering
	\begin{minipage}{\textwidth}
	\begin{lstlisting}
(defm gen-prog [curr-depth max-depth prev-type]
    (let [expr-type (if (< curr-depth max-depth) 
                       (if (= prev-type 3)
                          (sample (discrete [0.35  0.35  0.3]))
                          (sample (discrete [0.3  0.3  0.2 0.2])))     
                       (sample (discrete [0.5 0.5])))]
        (cond
            (= expr-type 0) 
            (if (nil? prev-type)
                (let [_ (sample (normal 0 1))]
                   'x)
                'x)
            
            (= expr-type 1) 
            (let [_ (sample (normal 0 1))]
               (list '* 'x 'x))
            
            (= expr-type 2) 
            (let [a (sample (normal 0 1))
                  curr-depth (+ curr-depth 1)
                  expr-sin (list 'Math/sin 
                         (list '* a (gen-prog curr-depth max-depth expr-type)))]
               expr-sin)
            
            (= expr-type 3) 
            (let [a (sample (normal 0 1))
                  b (sample (normal 0 1))
                  curr-depth (+ curr-depth 1)
                  expr-plus (list '+ 
                         (list '* a (gen-prog curr-depth max-depth expr-type))
                         (list '* b (gen-prog curr-depth max-depth expr-type)))]
               expr-plus))))

(defm gen-prog-fn [max-depth]
    (list 'fn ['x] (gen-prog 1 max-depth nil)))

(defquery pcfg-fn-new [ins outs]
    (let [obs-std 0.5
          max-depth 3
          f (gen-prog-fn max-depth)
          f-x (mapv (eval f) ins)]
        (map #(observe (normal %1 obs-std) %2) f-x outs)
        f))
			\end{lstlisting}
	\end{minipage}
	\vspace{-12pt}
	\caption{Code example of the Function Induction model in Anglican}
	\label{fig:PCFG-anglican-code}
	\vspace{-5pt}
\end{figure}

\end{document}